\documentclass[a4paper,11pt]{article}

\usepackage[margin=1.2in]{geometry}
\usepackage{latex-macros}
\usepackage{subcaption}

\usepackage[dvipsnames]{xcolor}
\definecolor{lblue}{HTML}{908cc0}
\definecolor{mblue}{HTML}{519cc8}
\definecolor{hblue}{HTML}{1d5996}
\definecolor{lred}{HTML}{cb5501}
\definecolor{mred}{HTML}{f1885b}
\definecolor{hred}{HTML}{b3001e}
\definecolor{darkorange}{HTML}{7a2000}

\usepackage{pst-eucl}
\usepackage{multirow}


\usepackage{amsmath, amssymb, amsthm}
\usepackage{bm}
\usepackage{cancel}

\newtheorem{defn}{Definition}
\newtheorem{prop}{Proposition}
\newtheorem{corl}{Corollary}
\newtheorem{thrm}{Theorem}
\newtheorem{lem}{Lemma}
\theoremstyle{plain}

%
\usepackage[ruled]{algorithm}
\usepackage{algpseudocode}
\usepackage{caption}

\usepackage{tikz}
\usepackage{pgfplots}
\usepackage{tkz-graph}
\usetikzlibrary{
    matrix,chains,positioning,decorations.pathreplacing,arrows,patterns,
    svg.path, shapes.geometric}
\pgfplotsset{compat=newest, clip bounding box=default tikz}
\usepgfplotslibrary{fillbetween}

\usepackage[capitalize]{cleveref}

\newcommand{\normal}{\mathsf{N}}
\newcommand{\half}{\scalebox{0.75}}
\newcommand{\vtx}{\mathrm{vert}}
\newcommand{\R}{\mathbb{R}}

\newcommand{\Kb}{\mathbb{K}}


\newcommand{\Nc}{\mathcal{N}}



\delimiterfactor=750

\allowdisplaybreaks

\usepackage[square,numbers,sort]{natbib}

\crefname{defn}{definition}{definitions}
\crefname{lemma}{lemma}{lemmas}

\begin{document}
\title{\textsc{
    A Tropical Approach to Neural Networks
    with Piecewise Linear Activations}}
\author{
    \textsc{Vasileios Charisopoulos}\thanks{\quad
        School of Operations Research and Information Engineering, Cornell University,
        Ithaca, NY 14850, USA.
   \email{vc333@cornell.edu}}
   \qquad
   \textsc{Petros Maragos}\thanks{\quad School of Electrical \& Computer
   Engineering,
       National Technical University of Athens, Zografou Campus, 15773 Athens,
       Greece.
       \email{maragos@cs.ntua.gr}}}
\date{May 22, 2018; Revised \today}
\maketitle
\begin{abstract}
We present a new, unifying approach following some recent developments on the
complexity of neural networks with piecewise linear activations. We treat neural
network layers with piecewise linear activations as \textit{tropical
polynomials}, which generalize polynomials in the so-called $(\max, +)$ or
\textit{tropical} algebra, with possibly real-valued exponents.
Motivated by the discussion in~\cite{MPC+14}, this approach enables us to
refine their upper bounds on linear regions of layers with ReLU or leaky ReLU
activations to $\min\left\{ 2^m, \sum_{j=0}^n \binom{m}{j} \right\}$, where $n,
m$ are the number of inputs and outputs, respectively. Additionally, we recover
their upper bounds on maxout layers. Our work follows
a novel path, exclusively under the lens of tropical geometry, which is
independent of the improvements reported in~\cite{Aro+16,SerTjaRam18}.
Finally, we present a geometric approach for effective counting of linear
regions using random
sampling in order to avoid the computational overhead of exact counting
approaches.
\end{abstract}

\section{Introduction}
In the past decade, multilayered architectures of neural networks have
enjoyed an unprecedented growth in popularity, with the introduction of the
paradigm of \textit{deep learning}~\cite{Bengi09,KriSuHi12,GoodBenCour16}.
Deep neural networks consist of the composition of many
layers of neurons, which are typically fed through nonlinear activation functions. Two of the most widely used such activations are rectifier linear units (ReLUs) and \textit{maxout} units, which are both piecewise-linear. ReLUs have been shown to outperform traditional choices of activation functions in empirical studies~\cite{GloBorBen11, MHN13}, while maxout networks~\cite{GWM+13} were also
quickly adopted after their introduction~(see e.g. \cite{ZTP+14}), as they were empirically validated to integrate well with an averaging technique called \textit{dropout}~\cite{SNK+14}.
The output of a neural network employing either of the above
activations is a piecewise-linear function; \cite{PMB13,MPC+14} argued that the number of \textit{linear regions} (i.e. regions of the input space where the output is locally linear) designated by a neural network is a good indicator of its expressive power, and consequently sought to derive upper bounds.

We briefly sketch the outline of this paper:
\begin{enumerate}
    \item We show that families of piecewise-linear activation functions
    employed in (deep) neural networks naturally correspond to so-called
    \textit{max-polynomials} or \textit{tropical polynomials} with real
    exponents.
    We obtain bounds on the number of linear regions of piecewise-linear neural network layers employing a certain duality between tropical polynomials and
    their corresponding Newton Polytopes.
    \item We identify an efficient way for counting linear regions of neural network layers in practice, which adapts a randomized algorithm for counting extreme points of convex polytopes to the Minkowski sum setting.
\end{enumerate}

\subsection{Notation and terminology}
For the reader's convenience, it is necessary to explain the notation and
terminology used in subsequent chapters, as well as a few conventions that we
will follow. We denote by $\R$ the line of real numbers and use $\R_{\max}$ for the extended real numbers $\R_{\max} := \R \cup \{ -\infty \}$.
We denote scalars by regular lowercase font, such as $x \in \R$;
vectors by bold lowercase, such as $\bm{x} \in \R^n$; and matrices by bold uppercase, such as $\bm{X} \in \R^{m \times n}$. We follow the convention of
column vectors, unless explicitly stated otherwise. We denote the set of
indices $[n] := \set{1, \dots, n}$, and write $\norm{\cdot}$ for the
$\ell_2$ norm, \( \norm{\bm{x}} := \left( \sum_{i=1}^n \abs{x_i}^2 \right)^{1/2}. \)

We also follow the lattice-theoretic notation of the mathematical morphology
community with regard to the idempotent operators $\max, \min$, in the spirit
of~\cite{Mar17}. Specifically, given $v_i \in \R$:
\begin{align}
	\bigvee_{i=1}^n v_i := \max \left( v_1, \dots v_n \right), \quad
    \bigwedge_{i=1}^n v_i := \min \left( v_1, \dots v_n \right)
\end{align}
Finally, we write $\normal(\bm{0}, \bm{I}_d)$ for the multivariate centered normal vector with unit covariance matrix.

\subsection{Related Work}
The use of tropical geometry to bound the representation power and complexity
of learning models has been pioneered by~\cite{PaSt04} in their seminal paper, which used tropical geometry to assess the effect of graphical model parameters on the solutions of the corresponding inference problems. This line of work was later extended in more general settings,
ranging from applications on computational biology~\cite{PaSt05} to the
identifiability of the Restricted Boltzmann Machine~\cite{CueMorStu10}.

Bounds on the inference regions of neural networks were, to the best of our
knowledge, first given in~\cite{MSE89}, who considered a $2$-layer neural
network with $0$-$1$ activations. More than two decades later,
in~\cite{PMB13,MPC+14}, the authors rederived essentially the same bounds for
layers of neural networks with convex piecewise linear activations, which are
more common in contemporary architectures. These bounds were also employed
in~\cite{RagPooKle+16}, where the authors are concerned with identifying
varying measures of expressivity of deep neural networks.
Other authors~\cite{SerTjaRam18,Aro+16} have since refined these types
bounds and proposed practical ways of counting linear regions of neural
networks~\cite{SerTjaRam18,SerRam18}.
Concurrently to the publication of the first edition of this
paper,~\cite{ZhaNaiLim18} established a similar correspondence between inference
regions of neural networks and tropical geometry. However, to the best of our
knowledge, such a connection had already been encountered in~\cite{ChMa17},
where it was observed that maxout and ReLU activations are essentially
represented by their corresponding Newton polytopes. Finally,
in~\cite{CalGauPos18} the authors design universal approximators of certain
classes of data using an argument related to the \textit{Maslov
dequantization}, an important transform in tropical algebra.

\section{Background}
\subsection{The tropical semiring}
The term ``tropical semiring'' refers to one of the $(\max, +)$ or $(\min, +)$
semirings, which are the algebraic structures defined as $(\R_{\max}, \max, +)$
and $(\R_{\min}, \min, +)$, respectively. In short, ordinary ``addition'' is
replaced by the maximum or minimum, and ``multiplication'' is replaced by
ordinary addition. We use the symbols $\vee, \boxplus$ to refer to matrix/vector
addition and multiplication in the case of the $(\max, +)$ semiring; a notable
exception is when the operands are scalars, where we may use just $\max$/$\min$
and $+$ for simplicity. Table~\ref{tab:linear_maxalgebra_correspondences} summarizes some important correspondences between linear and $(\max, +)$
algebra. Vector operations generalize in the obvious way: for example, the dot product is as follows:
\begin{align}
    \bm{c}^\top \boxplus \bm{d} & :=
        \bigvee_{i=1}^k c_i + d_i
        \label{eq:maxplus_dotproduct}
\end{align}
Similar definitions hold for the $(\min, +)$ semiring.

\begin{table}
    \centering
    \caption{Correspondences between linear and $(\max, +)$ arithmetic}
    \begin{tabular}{c | c}
        Linear arithmetic & $(\max, +)$ arithmetic \\ \hline
        + & $\max$ \\
        $\times$ & + \\
         0  & $-\infty$ \\
         1  & 0 \\
         $x^{-1} = 1/x$ &
         $x^{-1} = -x$ \\ \hline
    \end{tabular}
    \label{tab:linear_maxalgebra_correspondences}
\end{table}

\subsection{Elements of Discrete \& Tropical Geometry}
Subsequent sections make extensive use of results \& definitions from discrete
geometry, which we briefly present here; we mainly follow~\cite{Zieg95}.
First, we need the notion of a convex hull:
\begin{defn} \label{defn:convex_conic_hull}
    Let $\bm{v}_1, \dots, \bm{v}_m$ be a collection of points in $\R^n$.
    Their \textbf{convex hull} is defined as
    \begin{equation}
    \mathrm{conv}\{ \bm{v}_i : i \in [m] \} :=
    \sum_{i=1}^m \lambda_i \bm{v}_i, \quad \lambda_i \geq 0, \ \sum_{i=1}^m
    \lambda_i = 1.
    \end{equation}
\end{defn}
A (convex) \textit{polytope} $P \subseteq \Rbb^n$ is a set which can be written
as the convex hull of a finite set of points; if these points are
known, we say that $P$ admits a $\cV$-representation:
\begin{equation}
	P = \mathrm{conv}\set{\bm{v}_1, \dots, \bm{v}_k} \label{eq:V-repr}
\end{equation}
Additionally, we write
\begin{equation}
	\mathrm{vert}(P) := \set{\bm{v} \mmid \bm{v} \text{ is a vertex of } P}.
	\label{eq:vertex-set}
\end{equation}
We define the \textbf{upper hull} $P^{\max}$ of a polytope $P$ as
\begin{equation}
	P^{\max} := \left\{
		(\lambda, \bm{x}) \in P: (t, \bm{x}) \in P \Rightarrow t \leq
		\lambda \right\}.
	\label{eq:upper-hull}
\end{equation}
The \textbf{lower hull}, $P^{\min}$, is defined in an analogous fashion.
We also deal with \textit{Minkowski sums} of convex polytopes, which are
defined as follows:
\begin{defn}\label{defn:minkowski_sum}
    Let $P, Q \in \R^n$ be convex polytopes. Their \textbf{Minkowski sum}
    is
    \begin{align}
    P \oplus Q & := \{ \bm{p} + \bm{q} \in \R^n: \bm{p} \in P, \ \bm{q} \in Q
    \} \\
    	&= \mathrm{conv}\set{\bm{p} + \bm{q} \mmid
    			\bm{p} \in \vtx(P), \; \bm{q} \in \vtx(Q)}, \nonumber \\
    \end{align}
\end{defn}
where we can write the latter if their $\cV$-representations are given.
Obviously, the Minkowski sum of two or more convex polytopes is also a convex polytope. Another fundamental object we employ is the \textbf{normal cone} to a point of a polytope:
\begin{defn} \label{defn:normal-cone}
	The \textbf{normal cone} to a polytope $P$ at $\bm{x}$ is
	\begin{equation}
		N_{P}(\bm{x}) := \set{\bm{c} \in \Rbb^n \mmid
		\bm{c}^\top (\bm{z} - \bm{x}) \leq 0, \; \forall \bm{z} \in P}.
	\end{equation}
\end{defn}
Lemma~\ref{lemma:normal-cone-span} tells us that the normal cones of a polytope
cover the whole underlying space:
\begin{lem} \label{lemma:normal-cone-span}
	Let $P \subset \Rbb^n$ be a polytope, and denote $\mathrm{vert}(P)$ for its
	collection of vertices. Then
	\(  \displaystyle
		\setU_{\bm{v} \in \mathrm{vert}(P)}
        N_P(\bm{v}) = \Rbb^n.
	\)
\end{lem}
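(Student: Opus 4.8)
The plan is to prove the two inclusions separately. The reverse inclusion $\bigcup_{\bm{v} \in \vtx(P)} N_P(\bm{v}) \subseteq \R^n$ is immediate, since each normal cone is by definition a subset of $\R^n$; hence the entire content lies in establishing $\R^n \subseteq \bigcup_{\bm{v} \in \vtx(P)} N_P(\bm{v})$. The key reformulation I would use is that $\bm{c} \in N_P(\bm{x})$ holds precisely when $\bm{x}$ maximizes the linear functional $\bm{z} \mapsto \bm{c}^\top \bm{z}$ over $P$: rewriting the defining inequality $\bm{c}^\top(\bm{z} - \bm{x}) \le 0$ as $\bm{c}^\top \bm{z} \le \bm{c}^\top \bm{x}$ for all $\bm{z} \in P$ says exactly that $\bm{x}$ attains the maximum. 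Thus the lemma reduces to the linear-programming fact that for every direction $\bm{c} \in \R^n$ the functional $\bm{c}^\top \bm{z}$ attains its maximum over $P$ at some vertex.

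To carry this out, I would fix an arbitrary $\bm{c} \in \R^n$ and invoke the $\mathcal{V}$-representation $P = \mathrm{conv}\{\bm{v}_1, \dots, \bm{v}_k\}$ with $\{\bm{v}_i\} = \vtx(P)$, which is available because $P$ is a polytope (a polytope equals the convex hull of its vertex set). Writing any $\bm{z} \in P$ as a convex combination $\bm{z} = \sum_i \lambda_i \bm{v}_i$ with $\lambda_i \ge 0$ and $\sum_i \lambda_i = 1$, linearity gives $\bm{c}^\top \bm{z} = \sum_i \lambda_i (\bm{c}^\top \bm{v}_i) \le \max_i \bm{c}^\top \bm{v}_i$. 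Therefore the maximum of $\bm{c}^\top \bm{z}$ over all of $P$ is attained at a vertex $\bm{v}^\star \in \arg\max_i \bm{c}^\top \bm{v}_i$; in particular $\bm{c}^\top \bm{z} \le \bm{c}^\top \bm{v}^\star$ for every $\bm{z} \in P$, which is exactly the condition $\bm{c} \in N_P(\bm{v}^\star)$. Since $\bm{c}$ was arbitrary, every point of $\R^n$ lies in some vertex normal cone, yielding the desired inclusion.

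The argument is elementary, and the only place where genuine care is needed is this reduction, which relies crucially on $P$ being a bounded polytope rather than a general polyhedron: boundedness is what guarantees the maximum is finite and attained, equivalently that $P = \mathrm{conv}(\vtx(P))$ with finitely many vertices. For an unbounded polyhedron the vertex normal cones would fail to cover $\R^n$, covering instead only the polar of the recession cone, so I would flag the polytope hypothesis explicitly as the load-bearing assumption. A minor degenerate case worth a one-line remark is $\bm{c} = \bm{0}$, for which every vertex is trivially a maximizer, consistent with $\bm{0}$ belonging to every normal cone.
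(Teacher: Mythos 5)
Your proof is correct and takes essentially the same route as the paper: fix an arbitrary $\bm{c} \in \Rbb^n$, observe that the linear functional $\bm{z} \mapsto \bm{c}^\top \bm{z}$ attains its maximum over $P$ at some vertex $\bm{v}$, and conclude $\bm{c} \in N_P(\bm{v})$. The only difference is cosmetic: the paper cites the fundamental theorem of linear programming for the vertex-maximizer step, whereas you prove it inline via the convex-combination bound, making your version self-contained.
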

\begin{proof}
    Consider an \textbf{arbitrary} vector $\bm{c} \in \Rbb^n$ and its
    associated linear functional $\bm{x} \mapsto \bm{c}^\top \bm{x}$, which
    attains a maximizer on $P$.	By the fundamental theorem of linear
    programming \cite{Vand14}, all
    linear functionals attain their maxima / minima on one of the vertices of $P$, which means that $\exists \bm{v} \in \mathrm{vert}(P)$ such that
    \[
        \bm{c}^\top \bm{v} \geq \bm{c}^\top \bm{x}, \; \forall \bm{x} \in P
        \Rightarrow \bm{c} \in N_P(\bm{v}).
    \]
\end{proof}
Given a cone, its \textbf{solid angle} is as follows:
\begin{defn} \label{defn:solid-angle}
    Consider a convex cone $K \subseteq \Rbb^n$. The \textbf{solid angle} of $K$,
    $\omega(K)$, is defined as
    \begin{align*}
        \omega(K) & := \int_{K} \exp\left(-\pi
        \norm{\bm{x}}^2 \right) \dd{\bm{x}} \\
        &=
        \frac{1}{(2 \pi)^{n/2}}\int_{K}
        \exp\left(-\frac{\norm{\bm{x}}^2}{2}\right) \dd{\bm{x}}
    \end{align*}
\end{defn}
Note that the latter expression in~\Cref{defn:solid-angle} is equal to
$\prob{\bm{g} \in K}, \; \bm{g} \sim \normal(\bm{0}, \bm{I}_n)$, implying the
following:
\begin{corl}
    Given a convex polytope $P$, the solid angles of the normal cones to its
    vertices form a probability distribution, i.e.
    \(  \displaystyle
        \sum_{\bm{v} \in \mathrm{vert}(P)} \omega(N_P(\bm{v})) = 1.
    \)
\end{corl}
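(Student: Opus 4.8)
The plan is to combine the two results that immediately precede the corollary: \Cref{lemma:normal-cone-span}, which tells us the normal cones cover $\Rbb^n$, and the probabilistic reinterpretation of the solid angle given right after \Cref{defn:solid-angle}, namely $\omega(N_P(\bm{v})) = \prob{\bm{g} \in N_P(\bm{v})}$ for $\bm{g} \sim \normal(\bm{0}, \bm{I}_n)$. Once both facts are in hand, the corollary is essentially a statement that the probabilities of a collection of events that exhaust the sample space must sum to one. First I would fix $\bm{g} \sim \normal(\bm{0}, \bm{I}_n)$ and observe that by \Cref{lemma:normal-cone-span}, with probability one $\bm{g}$ lands in at least one of the normal cones $N_P(\bm{v})$, so the union of these events is (almost) the whole probability space. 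Then I would sum the solid angles, rewrite each as a probability, and push the sum through to conclude.

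The hard part will be handling the \emph{overlaps} between adjacent normal cones, since the events $\set{\bm{g} \in N_P(\bm{v})}$ are not quite disjoint: two normal cones can share a common boundary facet. A clean summation requires that these pairwise intersections be negligible. The key observation is that $N_P(\bm{v}) \cap N_P(\bm{v}')$ for distinct vertices $\bm{v}, \bm{v}'$ lies in a lower-dimensional set — the normal cone to a higher-dimensional face of $P$ — and therefore has Lebesgue measure zero. Since the Gaussian measure is absolutely continuous with respect to Lebesgue measure, these overlaps carry zero probability, and I can treat the cones as if they were disjoint for the purposes of summing. I would therefore argue that
\begin{align}
    \sum_{\bm{v} \in \mathrm{vert}(P)} \omega(N_P(\bm{v}))
    &= \sum_{\bm{v} \in \mathrm{vert}(P)} \prob{\bm{g} \in N_P(\bm{v})}
    = \prob*{\bm{g} \in \setU_{\bm{v} \in \mathrm{vert}(P)} N_P(\bm{v})},
\end{align}
where the last equality uses that the events partition the sample space up to a measure-zero overlap.

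Finally, I would invoke \Cref{lemma:normal-cone-span} one last time: the union on the right is exactly $\Rbb^n$, so the probability on the right-hand side equals $\prob{\bm{g} \in \Rbb^n} = 1$. This closes the chain of equalities and yields the claim. The only genuinely substantive step, and the one I would be most careful to state precisely, is the measure-zero argument for the overlaps; everything else is a direct translation between the integral definition of the solid angle and its Gaussian-probability reading, combined with countable (indeed finite) additivity of the measure over an essentially disjoint cover.
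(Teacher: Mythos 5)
Your proposal is correct and follows essentially the same route as the paper's own proof: rewrite each solid angle as the Gaussian probability $\prob{\bm{g} \in N_P(\bm{v})}$, combine the sum into the probability of the union using the fact that pairwise intersections of normal cones are negligible, and invoke \Cref{lemma:normal-cone-span} to conclude that the union is all of $\Rbb^n$. The only difference is that you justify the measure-zero overlap claim (via the lower-dimensionality of intersections of normal cones at distinct vertices), whereas the paper simply asserts $\omega(N_P(\bm{v}_i) \cap N_P(\bm{v}_j)) = 0$ without proof, so your write-up is, if anything, slightly more complete.
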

\begin{proof}
    Obviously, $\omega(N_P(\bm{v})) \geq 0, \; \forall \bm{v}$.
    Using~\Cref{defn:solid-angle}, we may write
    \begin{align*}
        \sum_{\bm{v} \in \mathrm{vert}(P)} \omega(N_P(\bm{v})) &=
        \sum_{\bm{v} \in \mathrm{vert}(P)} \prob{g \in N_P(\bm{v})} \\
        &=
        \prob{\setU_{\bm{v} \in \mathrm{vert}(P)} \set{g \in N_P(\bm{v})}} = 1,
    \end{align*}
    where we made use of the fact that $\omega(N_P(\bm{v}_i) \cap
    N_P(\bm{v}_j)) = 0$ and Lemma~\ref{lemma:normal-cone-span}.
\end{proof}
Finally, we call a set of $m$ points in $\Rbb^n$ to be in \textbf{general
position} if no $n + 1$ of them lie on a common hyperplane.

\subsubsection{Tropical Polynomials}
We briefly introduce tropical polynomials, on which we heavily rely in our
approach. A polynomial in $n$ variables with coefficients from a field $\Kb$,
$p \in \Kb[x_1, x_2, \dots x_n]$, is defined as
\[
	p(\bm{x}) = \sum_{i} a_i \cdot \bm{x}^{\bm{u}^i}, \quad
    \bm{u}^i \in \mathbb{N}^n
\]
so that the exponent $\bm{u}^i$ results in $\bm{x}^{\bm{u}^i} = x_1^{u^i_1}
x_2^{u^i_2} \cdots x_n^{u_n^i}$. If one relaxes the assumption on the exponent
$\bm{u}^i$ being an integer vector, and allowing $\bm{u}^i \in \mathbb{R}^n$
instead, we then call the resulting expression a
\textbf{signomial}~\cite{DufPet67}. Signomials and their positive-coefficient
special cases, called \textit{posynomials}, appear in the context of geometric
programming.
In tropical geometry, polynomials exhibit fundamental differences due to the
underlying binary operations. The multi-exponent $\bm{u}^i$ is
replaced by a vector of coefficients  $\bm{c}_i$, and exponentiation becomes
the dot product. A tropical polynomial can be viewed as the ``tropicalization''
of an ordinary polynomial over a non-Archimedean field. For
further details, we refer the reader to~\cite{MaSt15}. However, given
that we wish to model activations of neural networks which can have real
coefficients, we adopt the corresponding terminology and talk about
\textbf{tropical signomials} (also referred to as \textit{maxpolynomials}
in~\cite{But10}), where $\bm{c}_i \in \mathbb{R}^n$ as shown
below:
\begin{equation}
	h(\bm{x}) = \bigvee_{i=1}^m b_i + \bm{c}_i^\top \bm{x}, \quad
    \bm{c}_i \in \mathbb{R}^n
    \label{eq:tropical_signomial}
\end{equation}
In the sequel, we will use the terms ``polynomials'' and ``signomials'' interchangeably, i.e. tropical polynomials will always allow real exponents.
We say a polynomial is of \textit{rank} $k$ if it is the maximum of $k$ terms.

A \textit{hypersurface} associated with a ``classical'' polynomial $p$ is defined as its zero set, $V(p) = \{ \bm{x} \in \R^n: p(\bm{x}) = 0 \}$.
On the contrary, the ``zero locus'' of a tropical polynomial $p$ is simply
the set of points where the maximum is attained by more than one of its terms:
\begin{equation}
	V(p) = \{ \bm{x} \in \R^n_{\max}: p(\bm{x}) \mbox{ is singular }  \}
    \label{eq:tropical_hypersurface}
\end{equation}
An example of a tropical curve in $\R_{\max}^2$ is depicted in
Fig.~\ref{fig:tropical_curve_simple}. Every
half-ray corresponds to a different pair of maximizing terms: the
diagonal corresponds to $\{ (x, y): \ x = y > 0 \}$, the vertical half-ray to
$\{ (x, y): \ x = 0 > y \}$, and the horizontal to $\{ (x, y): \ y = 0 > x \}$.
More elaborate examples can be found in~\cite{MaSt15}.
\begin{figure*}[h!]
	\centering
	\begin{minipage}{0.48 \textwidth}
    \begin{tikzpicture}[->]%
    [square/.style={regular polygon,regular polygon sides=4},
     bluenode/.style={shape=circle, draw=blue, line width=2},
     magnode/.style={shape=circle, draw=magenta, line width=2},
     rednode/.style={shape=circle, draw=red, line width=2}]
        \draw[draw=hblue, thick, >=stealth] (0, 0) -- (1.41, 1.41);
        \draw[draw=lblue, thick, >=stealth] (0, 0) -- (0, -1.8);
        \draw[draw=lred, thick, >=stealth] (0, 0) -- (-1.8, 0);
		\node[fill=white] at (-2, 1) {$y > \max(0, x)$};
        \node[fill=white] at (2, -1) {$x > \max(0, y)$};
        \node[fill=white] at (-2, -1) {$0 > \max(x, y)$};
	\end{tikzpicture}
    \caption{Tropical curve of $p(x, y) = \max(x, y, 0)$}
    \label{fig:tropical_curve_simple}
	\end{minipage}~
	\begin{minipage}{0.48 \textwidth}
		\begin{tikzpicture}
		\begin{axis}[
		xmin=-30, xmax=15, ymin=-30, view={-45}{30}, ymax=25, colorbar,
		width=0.99 \linewidth, colormap/violet, xlabel=$x_1$,
		ylabel=$x_2$]
		\addplot3[surf,domain=-30:25] {
			max(x + y, 2 * x, x + 2 * y) + max(0, -y, 2 * x - 2 * y)
		};
		\end{axis}
		\end{tikzpicture}
		\caption{$g(x, y) = \max(x + y, 2 x, x + 2y) + \max(0, -y, 3x - 2y)$}
		\label{fig:maxpoly_sum}
	\end{minipage}
\end{figure*}
Informally, one can think of this duality as a one-to-one correspondence
between the vectors $\begin{pmatrix} b_i \\ \bm{c}_i \end{pmatrix}$ that
define the maximizing terms on each open sector, and open sectors of $V(p)$.
We will elaborate on this duality in~\cref{sec:tropical_geometry_connections}.

\section{Connections to Tropical Geometry}
\label{sec:tropical_geometry_connections}
With the definition of a tropical polynomial at hand, we can already draw some
connections between popular neural network models and tropical geometry.
We are concerned with the following cases:
\begin{itemize}
\item ReLU activations: given input $v = \bm{w}^\top \bm{x} + b$ with
$\bm{w}, \bm{x} \in \R^n$, a Rectifier Linear Unit computes
\begin{equation}
    \mbox{ReLU}(\bm{x}) = \max(0, \bm{w}^\top \bm{x} + b)
    \label{eq:relu_activation}
\end{equation}
\item
Maxout units: given $\bm{W} \in \R^{n \times k}$ and $\bm{b} \in
\R^k, \bm{x} \in \R^n$:
\begin{equation}
    \mbox{maxout}(\bm{x}) = \max_{j \in [k]} \left(
    \bm{W}_j^\top \bm{x} + b_j
    \right),
\end{equation}
where we denote $\bm{W}_j$ for the $j$-th row of $\bm{W}$.
\end{itemize}
A variation of ReLU for which this paper's results are
also applicable is the Leaky ReLU~\cite{MHN13}, which replaces the
standard activation function with
\begin{equation}
\mbox{LReLU}_{\alpha}(v) = \max(v, \alpha v), \quad 0 < \alpha < 1.
\label{eq:lrelu_activation}
\end{equation}
Notice that maxout units and ReLUs are tropical polynomials of rank $k$ and $2$,
respectively.
\subsection{Newton Polytopes of Tropical Polynomials}
Our investigation leverages a fundamental geometric object: the (extended)
\textbf{Newton Polytope} of a tropical polynomial. Given a polynomial as in
Eq.~\eqref{eq:tropical_signomial}, its corresponding Newton Polytope is defined as
in~\cref{eq:newton_polytope}.
\begin{align}
	\Nc(p) &:= \mathrm{conv}\left\{
	\left( \begin{array}{c} b_i \\ \bm{c}_i \end{array} \right) :
	i \in [m] \right\}
    \label{eq:newton_polytope}
\end{align}
Tropical addition and multiplication can also be interpreted as operations on
polytopes;~\cite{PaSt05} elaborate on applications of this interpretation.
\begin{prop} \label{prop:polytope_algebra}
    Let $h_1, \dots, h_m : \R_{\max}^n \to \R_{\max}$ be a collection of
    tropical polynomials. It holds that:
    \begin{align}
    	V\left( \sum_{i=1}^m h_i \right) &= \bigcup_{i=1}^m V(h_i)
    		\label{eq:tropplus_hypersurf} \\
        \Nc\left( \sum_{i=1}^m h_i \right) &=
            \Nc(h_1) \oplus \cdots \oplus \Nc(h_m)
            \label{eq:tropmul_newtp}
    \end{align}
\end{prop}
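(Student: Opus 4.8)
The plan is to read $\sum_{i=1}^m h_i$ as the ordinary pointwise sum of the functions---equivalently, their tropical \emph{product}---and to extract both identities from a single expansion into affine terms. (That this, and not the tropical sum, is the intended operation is forced by~\eqref{eq:tropmul_newtp}: only a product has a Newton polytope equal to a Minkowski sum.) Writing each factor as $h_i(\bm{x}) = \bigvee_{j} b_{ij} + \bm{c}_{ij}^\top \bm{x}$, the associativity of pointwise addition, of the union, and of the Minkowski sum (\Cref{defn:minkowski_sum}) makes the $m$-fold operations unambiguous, so it is enough to understand how one term set combines with another.

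For the Newton-polytope identity~\eqref{eq:tropmul_newtp}, the engine is the distributivity of $+$ over $\max$. Pushing the outer sum inside the maxima gives
\[
  \sum_{i=1}^m h_i(\bm{x}) = \bigvee_{(j_1,\dots,j_m)} \left(\sum_{i=1}^m b_{i j_i}\right) + \left(\sum_{i=1}^m \bm{c}_{i j_i}\right)^{\!\top} \bm{x},
\]
so the monomials of $\sum_i h_i$ are indexed by tuples $(j_1,\dots,j_m)$ and carry the coefficient vector $\bigl(\sum_i b_{i j_i},\, \sum_i \bm{c}_{i j_i}\bigr)$. This generating set is exactly the set of all pairwise (tuplewise) sums of the generators of the individual $\Nc(h_i)$. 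Since the convex hull of all such sums coincides with the convex hull over vertex tuples only, this is precisely the $\cV$-representation of the Minkowski sum recorded in \Cref{defn:minkowski_sum}, giving $\Nc(\sum_i h_i) = \Nc(h_1) \oplus \cdots \oplus \Nc(h_m)$.

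For the hypersurface identity~\eqref{eq:tropplus_hypersurf}, I would argue directly from \eqref{eq:tropical_hypersurface}: $\bm{x} \in V(p)$ iff the defining maximum is attained by at least two terms. From the expansion, a tuple is active for $\sum_i h_i$ at $\bm{x}$ iff each $j_i$ is active for its own factor $h_i$, so the active tuples form the Cartesian product of the per-factor active sets. For $\bigcup_i V(h_i) \subseteq V(\sum_i h_i)$, suppose $\bm{x} \in V(h_i)$, so $h_i$ has two active terms $j,j'$; fixing any active index in every other factor and toggling only the $i$-th coordinate between $j$ and $j'$ yields two active monomials of $\sum_i h_i$. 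For the reverse inclusion, if $\bm{x} \notin \bigcup_i V(h_i)$ then every factor has a unique active term, the active tuple is unique, and $\sum_i h_i$ is locally affine, so $\bm{x} \notin V(\sum_i h_i)$.

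The step requiring the most care is the non-cancellation in the forward inclusion: I must ensure that a kink of a single factor genuinely survives in the sum rather than being annihilated by the others. This is where the \emph{convexity} of tropical polynomials is essential---because distinct active terms necessarily have distinct gradients, the ``toggle one coordinate'' construction changes the gradient by $\bm{c}_{ij} - \bm{c}_{ij'} \neq \bm{0}$ while leaving the remaining summands fixed, so the two resulting monomials are genuinely distinct and cannot cancel. Equivalently, one could invoke the subdifferential sum rule $\partial\bigl(\sum_i h_i\bigr)(\bm{x}) = \partial h_1(\bm{x}) \oplus \cdots \oplus \partial h_m(\bm{x})$ and note that a Minkowski sum of compact convex sets is a singleton iff each summand is. For a general, non-convex piecewise-linear function this can fail, which is exactly why the statement is specific to the $(\max,+)$ setting.
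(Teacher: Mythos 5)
Your proof is correct, and it splits into two halves that relate differently to the paper's argument. For the Minkowski-sum identity~\eqref{eq:tropmul_newtp} you do essentially what the paper does: expand the pointwise sum by distributivity of $+$ over $\max$, observe that the monomials of $\sum_i h_i$ are exactly the tuplewise sums of monomials of the factors, and match this against the $\cV$-representation of the Minkowski sum in Definition~\ref{defn:minkowski_sum} (the paper does it for two factors and invokes induction; you do all $m$ at once, and you are in fact slightly more careful about the step $\mathrm{conv}(A+B) = \mathrm{conv}(A) \oplus \mathrm{conv}(B)$, which the paper leaves implicit). For the hypersurface identity~\eqref{eq:tropplus_hypersurf}, however, you take a genuinely different route: the paper simply cites Proposition 1.16 of its reference [Hept09] for two polynomials and extends by induction, remarking only that the cited proof tolerates real exponents, whereas you give a direct, self-contained argument via the characterization that a tuple $(j_1,\dots,j_m)$ is active for $\sum_i h_i$ at $\bm{x}$ iff each $j_i$ is active for $h_i$. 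Your treatment of the one delicate point is the real payoff: two distinct terms of a single $h_i$ that are simultaneously active at $\bm{x}$ must have distinct gradient vectors $\bm{c}_{ij} \neq \bm{c}_{ij'}$ (otherwise they would be the same term), so toggling the $i$-th coordinate of an active tuple produces two sum-monomials whose gradients differ by $\bm{c}_{ij} - \bm{c}_{ij'} \neq \bm{0}$; hence they remain distinct terms even after merging duplicate coefficient vectors, and the kink cannot cancel. This isolates exactly why the identity is special to the $(\max,+)$ (convex) setting and fails for general piecewise-linear functions such as differences of maxima --- a point the paper's citation-based proof never surfaces. The trade-off is the usual one: the paper's proof is shorter and defers the work to the literature, while yours is longer but self-contained, handles real exponents and $m$ factors directly, and makes the non-cancellation mechanism explicit.
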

\begin{proof}
	The first identity can be found as Proposition 1.16 in~\cite{Hept09} for two
	polynomials and extended via induction. Importantly, its proof does not require
	the exponents to be integer-valed. For the second identity, consider
	\begin{align}
		h_1(\bm{x}) & := \bigvee_{i=1}^{k_1} \alpha_i + \bm{\beta}_i^\top
		\bm{x}, \;
		h_2(\bm{x}) := \bigvee_{i=1}^{k_2} \gamma_i + \bm{\delta}_i^\top \bm{x}
		\\
        (h_1 + h_2)(\bm{x}) &=
            \bigvee_{i \in [k_1], j \in [k_2]} \alpha_i + \gamma_j +
            (\bm{\beta}_i + \bm{\delta}_j)^\top \bm{x},
            \label{eq:sum-of-max}
	\end{align}
    where~\cref{eq:sum-of-max} follows from the identity
    $(a + b) \vee (c + d) = (a + c) \vee (b + c) \vee (a + d) \vee (b + d)$.
    However, the terms inside the maximum are precisely sums of individual
    terms of the two polynomials, so the claim follows. The proof can again be
    extended via induction.
\end{proof}

We present a few results about faces of polytopes that will be needed in Sec.~\ref{sec:linear_regions}. First, recall the definition for a special kind of polytope, called a \textit{zonotope}:
\begin{defn}
    A \textbf{zonotope} $Z \in \R^n$ is a polytope in $\R^n$ which can be
    written as the Minkowski sum of a set of line segments (edges).
    \label{defn:zonotope}
\end{defn}
The \textit{edgotope} is the \textit{smallest zonotope
covering $P$}:
\begin{defn} \label{defn:edgotope}
	The edgotope $Z(P)$ of a polytope $P$ is the Minkowski sum
	of all the \textit{edges} of $P$:
    \begin{equation}
        Z(P) := \bigoplus_{\bm{e} \in \mathrm{edges}(P)} \bm{e}
        \label{eq:edgotope}
    \end{equation}
\end{defn}
Proposition~\ref{prop:edgotope_polytope_inequality} is a remarkable inequality
between faces of polytopes and their edgotopes. Theorem~\ref{thrm:edgotope_polytope_faces_inequality} leverages it to upper
bound the faces of an arbitrary Minkowski sum. Both appear in~\citep[Section 2]{GriStu93}.
\begin{prop} \label{prop:edgotope_polytope_inequality}
    Let $f_i(P)$ denote the number of $i$-dimensional faces of a polytope $P$.
    Given polytopes $P_1, P_2, \dots, P_k \in \R^n$, we have:
    \begin{equation*}
        f_i(P_1 \oplus P_2 \dots \oplus P_k) \leq
        f_i\left( Z(P_1) \oplus Z(P_2) \dots \oplus Z(P_k) \right)
    \end{equation*}
\end{prop}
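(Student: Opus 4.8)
The plan is to pass to the dual picture of \emph{normal fans} and exploit the fact that the edgotope refines the face structure of a polytope in exactly the relevant directions. For a polytope $P \subseteq \R^n$ write $\mathcal{F}(P)$ for its normal fan, i.e.\ the complete polyhedral fan whose cones are the normal cones $N_P(F)$ of the faces $F$ of $P$ (Definition~\ref{defn:normal-cone}). Two classical facts drive the argument. First, there is a dimension-reversing bijection between the $i$-dimensional faces of a full-dimensional $P \subseteq \R^n$ and the $(n-i)$-dimensional cones of $\mathcal{F}(P)$; in particular $f_i(P)$ equals the number of $(n-i)$-cones of $\mathcal{F}(P)$. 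Second, the normal fan of a Minkowski sum is the common refinement of the normal fans of the summands, $\mathcal{F}(P_1 \oplus \cdots \oplus P_k) = \mathcal{F}(P_1) \wedge \cdots \wedge \mathcal{F}(P_k)$. There is no loss of generality in assuming the total sum is full-dimensional, since $Z(P)$ spans the same linear space as the edges of $P$, namely the space parallel to $\mathrm{aff}(P)$, so the summands and their edgotopes live in matching dimensions; otherwise one restricts to the linear span of the sum.

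The heart of the proof is the claim that $\mathcal{F}(Z(P))$ refines $\mathcal{F}(P)$ for every polytope $P$. To see this, note that each edge $\bm{e} \in \mathrm{edges}(P)$, viewed as a segment in direction $\bm{d}_{\bm{e}}$, contributes to the edgotope a single segment whose normal fan is the hyperplane $\bm{d}_{\bm{e}}^{\perp}$ together with its two closed half-spaces. Hence, by the Minkowski-to-refinement fact, $\mathcal{F}(Z(P))$ is exactly the fan induced by the hyperplane arrangement $\mathcal{A} := \bigcup_{\bm{e}} \bm{d}_{\bm{e}}^{\perp}$. On the other hand, the walls (codimension-one cones) of $\mathcal{F}(P)$ are precisely the normal cones $N_P(\bm{e})$ of the edges, and each such wall lies in $\bm{d}_{\bm{e}}^{\perp}$: for $\bm{c}$ in the relative interior of $N_P(\bm{e})$ the functional $\bm{x} \mapsto \bm{c}^\top \bm{x}$ is maximized along all of $\bm{e}$, forcing $\bm{c}^\top \bm{d}_{\bm{e}} = 0$. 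Therefore every wall of $\mathcal{F}(P)$ is contained in $\mathcal{A}$, so no hyperplane of $\mathcal{A}$ crosses the interior of a maximal cone of $\mathcal{F}(P)$; equivalently, each maximal (chamber) cone of $\mathcal{F}(Z(P))$ sits inside a single maximal cone of $\mathcal{F}(P)$. This is precisely the statement that $\mathcal{F}(Z(P))$ refines $\mathcal{F}(P)$.

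With the claim in hand, I would combine it with monotonicity of common refinement: if $\mathcal{F}(Z(P_j))$ refines $\mathcal{F}(P_j)$ for each $j$, then $\bigwedge_j \mathcal{F}(Z(P_j))$ refines $\bigwedge_j \mathcal{F}(P_j)$, since any maximal cone of the former is an intersection of maximal cones $\sigma_j' \subseteq \sigma_j$ and hence sits inside $\bigcap_j \sigma_j$. By the two facts above this reads $\mathcal{F}\bigl(\bigoplus_j Z(P_j)\bigr)$ refines $\mathcal{F}\bigl(\bigoplus_j P_j\bigr)$. It remains to turn refinement into the face-count inequality, for which I would show that refinement is injective on cones of each fixed dimension $d$: given a $d$-cone $\sigma$ of the coarser fan, its relative interior is a $d$-dimensional set covered by the relative interiors of the refining cones contained in $\sigma$, and a finite union of lower-dimensional pieces cannot cover it, so at least one refining cone $\sigma' \subseteq \sigma$ satisfies $\dim \sigma' = d$. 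Choosing one such $\sigma'$ per $\sigma$ gives an injection (distinct $\sigma$ have disjoint relative interiors, hence so do the chosen $\sigma'$), so the number of $d$-cones can only grow under refinement. Applying this with $d = n - i$ and invoking the face--cone duality yields $f_i\bigl(\bigoplus_j P_j\bigr) \le f_i\bigl(\bigoplus_j Z(P_j)\bigr)$.

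The main obstacle I anticipate is dimensional bookkeeping rather than any deep difficulty: pinning down the edge-to-wall containment and the arrangement description of $\mathcal{F}(Z(P))$ cleanly, and handling the non-full-dimensional case so that the face--cone duality ($f_i \leftrightarrow (n-i)$-cones) is invoked with the correct ambient dimension. Once the reduction to a common full-dimensional ambient space is in place, the remainder is a routine chase through refinement and duality.
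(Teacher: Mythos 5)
The paper never proves Proposition~\ref{prop:edgotope_polytope_inequality}: it is quoted verbatim from Section~2 of Gritzmann--Sturmfels~\cite{GriStu93}, so there is no internal proof to compare against. Your normal-fan argument is, in substance, the standard proof (and essentially the one in the cited source): face--cone duality, the fact that the normal fan of a Minkowski sum is the common refinement of the summands' fans, the identification of the edgotope's fan with the fan of the hyperplane arrangement $\mathcal{A} := \bigcup_{\bm{e}} \bm{d}_{\bm{e}}^{\perp}$, monotonicity of common refinement, and the fact that passing to a refinement cannot decrease the number of cones of any fixed dimension. The last step's covering claim --- that the cones of the finer fan contained in a cone $\sigma$ of the coarser fan cover $\mathrm{relint}(\sigma)$ --- is true but deserves its one-line justification: for $x \in \mathrm{relint}(\sigma)$, let $\tau$ be the cone of the finer fan with $x \in \mathrm{relint}(\tau)$; then $\tau \cap \sigma$ is a face of $\tau$ containing a relative-interior point of $\tau$, hence equals $\tau$, i.e.\ $\tau \subseteq \sigma$.

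One intermediate assertion is false as written: ``no hyperplane of $\mathcal{A}$ crosses the interior of a maximal cone of $\mathcal{F}(P)$.'' Take $P$ a triangle in $\R^2$: the line $\bm{d}_{\bm{e}}^{\perp}$ contains the wall $N_P(\bm{e})$, but its opposite ray passes through the interior of the normal cone of the vertex opposite the edge $\bm{e}$. This does not break the proof, because what you actually need is the implication from your stated premise, not this intermediate claim: since every wall of $\mathcal{F}(P)$ is contained in $\mathcal{A}$, the interior of each chamber of $\mathcal{A}$ is disjoint from all walls, hence contained in the union of the pairwise-disjoint open interiors of the maximal cones of $\mathcal{F}(P)$; being connected, it lies in exactly one of them, so each chamber sits inside a single maximal cone --- which is the refinement property. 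Hyperplanes of $\mathcal{A}$ slicing through interiors of maximal cones of $\mathcal{F}(P)$ are harmless; they only subdivide further. With that sentence repaired (and the routine observation, for injectivity, that a $d$-dimensional subcone $\sigma' \subseteq \sigma$ of a $d$-dimensional cone has $\mathrm{relint}(\sigma') \subseteq \mathrm{relint}(\sigma)$), your proof is correct.
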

\begin{thrm} \label{thrm:edgotope_polytope_faces_inequality}
    Let $P_1, P_2, \dots P_k$ be polytopes in $\R^n$, $m$ denote the number
    of nonparallel edges of $P_1, P_2, \dots P_k$, and $i \in \set{0, \dots, n-1}$.
    Then
    \begin{equation}
        f_i(P_1 \oplus P_2 \dots \oplus P_k) \leq
        2 \binom{m}{i} \sum_{j=0}^{n - 1 - i} \binom{m - 1 - i}{j}
        \label{eq:edgotope_polytope_faces_inequality}
    \end{equation}
    Moreover, for $f_0(P_1 \oplus \dots \oplus P_k)$, which denotes the number
    of vertices of the Minkowski sum, the bound
    of~\eqref{eq:edgotope_polytope_faces_inequality} is tight when $2k >
    n$.
\end{thrm}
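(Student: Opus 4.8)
The plan is to reduce the problem to counting faces of a single zonotope and then invoke the classical enumeration of faces of a generic zonotope. First I would apply \Cref{prop:edgotope_polytope_inequality} to pass from the Minkowski sum to its edgotopes, so that $f_i(P_1 \oplus \cdots \oplus P_k) \le f_i\bigl(Z(P_1) \oplus \cdots \oplus Z(P_k)\bigr)$. The right-hand polytope is a Minkowski sum of line segments, hence itself a zonotope by \Cref{defn:zonotope}; moreover, summing two parallel segments only produces a longer segment in the same direction, so the combinatorics of $Z := Z(P_1) \oplus \cdots \oplus Z(P_k)$ is governed by its $m$ distinct generator directions. Thus it suffices to bound $f_i$ of a zonotope generated by $m$ vectors in $\R^n$.

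The second step is to show that the face numbers of such a zonotope are maximized in general position, where they equal the right-hand side of \eqref{eq:edgotope_polytope_faces_inequality}. I would count the $i$-faces directly: each $i$-face is spanned by a choice of $i$ generators (giving $\binom{m}{i}$ options, all independent in general position), and the $i$-faces with a prescribed spanning set $S$ correspond to the chambers of the central hyperplane arrangement obtained by projecting the remaining $m-i$ generators into the quotient $\R^n/\operatorname{span}(S) \cong \R^{n-i}$. A generic central arrangement of $m-i$ hyperplanes in $\R^{n-i}$ has exactly $2\sum_{j=0}^{n-1-i}\binom{m-1-i}{j}$ chambers, and multiplying by $\binom{m}{i}$ reproduces the claimed bound. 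Degenerate directions only merge chambers and faces, so this generic value is an upper bound for $f_i(Z)$, which combined with the first step yields \eqref{eq:edgotope_polytope_faces_inequality}.

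For the tightness of the vertex bound ($i=0$) I would exhibit a configuration attaining it. The cleanest route is to realize the generic $m$-generator zonotope itself as a Minkowski sum of $k$ summands, for instance by partitioning the $m$ directions into $k$ groups and taking each $P_j$ to be the zonotope of its group; since the edgotope of a zonotope is itself, \Cref{prop:edgotope_polytope_inequality} holds with equality and $f_0$ meets the bound. The role of $2k > n$ is to guarantee that such a decomposition into $k$ summands can be arranged in general position so that the common refinement of the normal fans of the $P_j$ recovers the \emph{entire} arrangement of the $m$ hyperplanes orthogonal to the generators---equivalently, by \Cref{lemma:normal-cone-span}, that the edge normal cones of the summands cover each hyperplane completely. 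When $2k > n$ the summands jointly carry enough directions to separate every pair of adjacent chambers, so no two chambers collapse onto the same vertex of the sum.

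The main obstacle I anticipate is precisely this last point: the upper bound is essentially bookkeeping once the edgotope reduction is in place, but the tightness requires certifying that distinct chambers of the arrangement give distinct vertices of $P_1 \oplus \cdots \oplus P_k$. Two adjacent chambers separated by a hyperplane $H_e$ yield the same vertex unless some summand actually changes its maximizing vertex across $H_e$, i.e. unless an edge parallel to $e$ has its normal cone straddling that wall; verifying that the construction covers every hyperplane in this sense, and that $2k > n$ is the right threshold for it to be possible, is the delicate combinatorial-geometric step.
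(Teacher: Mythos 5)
First, a point of reference: the paper itself does not prove this theorem. Both \Cref{prop:edgotope_polytope_inequality} and \Cref{thrm:edgotope_polytope_faces_inequality} are quoted from Section~2 of \cite{GriStu93}, so your argument can only be judged against that source, not against anything in the text. Your upper-bound argument is correct and is essentially the Gritzmann--Sturmfels route: pass to $Z := Z(P_1) \oplus \cdots \oplus Z(P_k)$ via \Cref{prop:edgotope_polytope_inequality}, merge parallel segments so that $Z$ is a zonotope with at most $m$ distinct generator directions, and count faces through the dual central hyperplane arrangement. The bookkeeping checks out: in general position the $i$-faces parallel to the span of an $i$-subset $S$ of generators biject with the chambers of the arrangement induced by the remaining $m-i$ generators on $\R^n/\mathrm{span}(S) \cong \R^{n-i}$, a generic central arrangement of $m-i$ hyperplanes in $\R^{n-i}$ has $2\sum_{j=0}^{n-1-i}\binom{m-1-i}{j}$ chambers, and degeneration can only merge cells, so the generic value bounds $f_i(Z)$ from above.

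Where your assessment goes astray is the tightness part, and the error is in your favor: your construction is already a complete proof, and the ``main obstacle'' you flag at the end does not exist for it. If each $P_j$ is the zonotope of its group of generic directions, then $P_1 \oplus \cdots \oplus P_k$ is literally the zonotope on all $m$ generators, which has $m$ nonparallel edges and exactly $2\sum_{j=0}^{n-1}\binom{m-1}{j}$ vertices, meeting the bound --- for every $k \geq 1$, with the hypothesis $2k > n$ never used (harmless, since the statement only asserts tightness \emph{when} $2k > n$, a sufficient condition). The collapsing of adjacent chambers that worries you cannot occur here: the normal fan of a zonotope summand is the \emph{complete} arrangement of the hyperplanes orthogonal to its own generators, so every hyperplane $H_e$ of the full arrangement is entirely covered by walls of the fan of the summand owning $e$, and crossing $H_e$ changes that summand's maximizing vertex; distinct chambers therefore give distinct vertices of the sum, with no condition on $k$. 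The phenomenon you anticipate is genuine only for non-zonotopal summands --- a triangle in $\R^2$ has $m = 3$ nonparallel edges but only $3$ vertices against a bound of $6$ --- and that is the setting in which a hypothesis like $2k > n$ earns its keep in the sharper tightness statements of \cite{GriStu93}. For the theorem as reproduced in this paper, your final paragraph of hedging should simply be replaced by the one-line verification above.
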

In~\cref{eq:edgotope_polytope_faces_inequality}, the right hand side
is the number of $i$-faces of a zonotope generated by $m$ line
segments.

\subsection{On the number of linear regions of ReLU/Maxout layers}
\label{sec:linear_regions}
Pioneering work on DNNs with piecewise-linear activation units focuses on
extracting bounds for the number of linear regions they
designate~\cite{PMB13,MPC+14}.
In our treatment, we extract asymptotically similar upper bounds for maxout
units and a tight upper bound for layers of rectifier networks, leveraging the
corresponding Newton polytopes. In~\cite{MPC+14}, the authors argue that the
number of linear regions of a maxout unit is upper bounded by its rank. In
fact, that number is in bijection with the number of vertices of the
\textit{upper hull} of the corresponding Newton polytope. The following appears
in~\cite{ChMa17} without proof:
\begin{prop}
    \label{prop:linear_regions_bias_vertices}
    Let $h(\bm{x})$, as in~\eqref{eq:tropical_signomial}, describe the
    activation of a maxout unit. Then there is a bijection between
    $h$'s linear regions and the vertices lying on the \textbf{upper hull} $\Nc^{\max}(h)$ of $\Nc(h)$.
\end{prop}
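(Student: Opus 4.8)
The plan is to realize $h$ as the support function of its Newton polytope and then match the strict maximizers of $h$ with the vertices of the upper hull through their normal cones. Write $P = \Nc(h)$ and let $\bm{v}_i = \begin{pmatrix} b_i \\ \bm{c}_i \end{pmatrix}$ denote the generating points. The first thing I would establish is the identity
\[
    h(\bm{x}) = \max_{(\lambda, \bm{c}) \in P} \left( \lambda + \bm{c}^\top \bm{x} \right), \qquad \bm{x} \in \R^n,
\]
which holds because, for fixed $\bm{x}$, the map $(\lambda, \bm{c}) \mapsto \lambda + \bm{c}^\top \bm{x}$ is the linear functional on $\R^{n+1}$ with coefficient vector $(1, \bm{x}^\top)^\top$; by the fundamental theorem of linear programming (equivalently, Lemma~\ref{lemma:normal-cone-span}) it attains its maximum over $P$ at a vertex, i.e. at one of the $\bm{v}_i$. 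Thus $h(\bm{x})$ is the support function of $P$ evaluated in the ``upward'' direction $(1, \bm{x}^\top)^\top$, and the term attaining the maximum defining $h(\bm{x})$ is exactly the vertex of $P$ maximizing this functional.

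Next I would characterize linear regions. A linear region of $h$ is a maximal full-dimensional set on which a single term $b_i + \bm{c}_i^\top \bm{x}$ is the strict maximizer; since two distinct affine functions agree only on a set of lower dimension, the dominant term on such a region is unique, so linear regions are in bijection with the terms that are strictly dominant on some open set. Call such a term \emph{active}. By the previous paragraph, term $i$ is active precisely when $(1, \bm{x}^\top)^\top$ lies in the interior of the normal cone $N_P(\bm{v}_i)$ for some $\bm{x}$; equivalently, when $\mathrm{int}\, N_P(\bm{v}_i)$ meets the affine slice $\{ d_0 = 1 \}$ of direction space.

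The crux is to identify this condition with membership in the upper hull, noting first that the vertices of $\Nc^{\max}(h)$ are themselves vertices of $P$, hence among the $\bm{v}_i$. I would argue that a vertex $\bm{v}_i$ lies on $\Nc^{\max}(h)$ if and only if $N_P(\bm{v}_i)$ contains a direction whose first coordinate is positive: by the definition~\eqref{eq:upper-hull} of the upper hull, $\bm{v}_i$ is maximal in its first coordinate among points of $P$ sharing its remaining coordinates exactly when it admits a supporting hyperplane with outward normal $(d_0, \bm{d}'^\top)^\top$ satisfying $d_0 > 0$. Since $\bm{v}_i$ is a vertex, $N_P(\bm{v}_i)$ is full-dimensional, and a short-segment convexity argument (moving from any boundary direction with $d_0 > 0$ toward an interior direction) shows it contains a positive-first-coordinate direction if and only if its interior does; rescaling such an interior direction by its first coordinate places it on $\{ d_0 = 1 \}$, which is precisely the activeness condition above.

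Finally I would assemble the bijection: active terms $\leftrightarrow$ vertices of $\Nc^{\max}(h)$ by the crux, and active terms $\leftrightarrow$ linear regions by uniqueness, yielding the claim; each upper-hull vertex genuinely produces a full-dimensional region because the interior of its full-dimensional normal cone contributes an open set of admissible $\bm{x}$. The step I expect to be the main obstacle is the crux, where the set-theoretic description~\eqref{eq:upper-hull} of the upper hull must be reconciled with the supporting-hyperplane/normal-cone picture, and where the strictness distinction ($d_0 > 0$ versus $d_0 \geq 0$) together with the interior-versus-boundary subtlety must be handled carefully, so that degenerate vertices — those whose normal cones meet $\{ d_0 \geq 0 \}$ only along $\{ d_0 = 0 \}$ — are correctly excluded from both sides of the correspondence.
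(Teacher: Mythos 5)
Your proposal is correct, and its starting point is identical to the paper's: both of you rewrite $h(\bm{x})$ as the maximization of the linear functional with coefficient vector $(1, \bm{x}^\top)^\top$ over $\Nc(h)$ --- this is exactly the paper's linear program \eqref{eq:extended_linear_program_proof} --- so that dominant terms become vertex maximizers. Where you genuinely diverge is in how this is tied to the upper hull. The paper argues by domination: any vertex $\bm{c}_j'$ not on $\Nc^{\max}(h)$ lies below a point of the upper hull with the same last $n$ coordinates and a larger bias, as in \eqref{eq:upper_hull_inequality}, hence its term is never the maximizer; it then asserts, without detailed argument, that upper-hull maximizers may be taken to be upper-hull vertices. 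You instead characterize upper-hull membership via normal cones: a vertex $\bm{v}$ lies on $\Nc^{\max}(h)$ iff $N_{\Nc(h)}(\bm{v})$ contains a direction with positive first coordinate. This equivalence is correct for polytopes --- membership in the upper hull per \eqref{eq:upper-hull} says the upward direction is infeasible at $\bm{v}$, and polarity between the tangent cone and the normal cone converts infeasibility into the existence of a normal with $d_0 > 0$ --- and your passage from boundary to interior directions via full-dimensionality of vertex normal cones, followed by rescaling onto the slice $\left\{ d_0 = 1 \right\}$, is sound. What this buys you is precisely the direction the paper leaves implicit: every upper-hull vertex is the \emph{strict} maximizer on a nonempty open (indeed convex, hence connected) set of inputs, so it genuinely designates one linear region; the paper's domination argument only establishes that non-upper-hull vertices designate none, so it proves injectivity of the map from regions to upper-hull vertices but not surjectivity. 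The price is that your crux leans on the tangent-cone/normal-cone duality, which you rightly flag as the delicate step (the $d_0 > 0$ versus $d_0 = 0$ distinction), whereas the paper's inequality argument is elementary coordinate comparison.
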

\begin{proof}
	Consider
    \begin{equation}
        \bm{c}' = \left( \begin{array}{c} b \\ \bm{c} \end{array} \right), \quad
        \bm{x}' = \left( \begin{array}{c} 1 \\ \bm{x} \end{array} \right).
        \label{eq:proof_extended_vectors}
    \end{equation}
    We can thus rewrite the maxpolynomial's response as a linear program:
    \begin{align}
        \begin{aligned}
            \mathrm{Maximize}\ & (\bm{x}')^\top \bm{c}' \\
            \mathrm{s.t.}\ & \bm{c}' \in \Nc(h)
        \end{aligned}
        \label{eq:extended_linear_program_proof}
    \end{align}
    From the fundamental theorem of linear programming~\cite{Vand14}, we know
    that optimal solutions to~\eqref{eq:extended_linear_program_proof}
    will lie at one of the vertices of $\Nc(h)$. However, the restriction
    of the first element of $\bm{x}'$ hints that some vertices might be
    redundant. Indeed, pick any vertex $\bm{c}_j' \notin \Nc^{\max}(h)$, which
    implies that $\exists \bm{c}_i' \in \Nc^{\max}(h)$, not necessarily
    a vertex, satisfying:
    \begin{align}
        (\bm{c}_j')_1 &= b_j \leq (\bm{c}_i')_1 = b_i, \quad
         \bm{c}_j = \bm{c}_i \label{eq:upper_hull_property} \\
         \Rightarrow
         {\bm{x}'}^\top \bm{c}_j' &=
         b_j + \bm{x}^\top \bm{c}_j         \overset{\eqref{eq:upper_hull_property}}{\leq}
         b_i + \bm{x}^\top \bm{c}_i = {\bm{x}'}^\top \bm{c}_i'.
         \label{eq:upper_hull_inequality}
	\end{align}
    Inequality~\eqref{eq:upper_hull_inequality} means that, if we let
    $\bm{c}'$ run over all of the Newton polytope, all points not in the upper
    hull are redundant. Every point in the upper hull that maximizes a linear
    functional either is a vertex, or can be substituted by a vertex in the
    upper hull that maximizes the same linear form, from which the claim follows.
\end{proof}

In~\cref{fig:newt_examples} we illustrate the canonical projections of the
Newton polytopes of the individual summands of $g(x, y)$, which is depicted
in~\cref{fig:maxpoly_sum}. It appears to designate a total of $4$ linear
regions, as Proposition~\ref{prop:linear_regions_bias_vertices} suggests.

\begin{figure*}
    \begin{minipage}{0.65 \textwidth}
    \begin{minipage}{0.15 \textwidth}
        \begin{center}
        \begin{tikzpicture}
        \filldraw[mblue, draw=black,opacity=0.6]
        (1, 1) -- (1, 2) -- (2, 0) -- (1, 1);
        \fill (1, 1) circle (2pt);
        \node[left=0.5pt of {(1, 1)}] {\half{$(1, 1)$}};
        \fill (1, 2) circle (2pt);
        \node[above=0.5pt of {(1, 2)}] {\half{$(1, 2)$}};
        \fill (2, 0) circle (2pt);
        \node[below=0.5pt of {(2, 0)}] {\half{$(2, 0)$}};
        \end{tikzpicture}
        \end{center}
    \end{minipage}
    \begin{minipage}{0.25 \textwidth}
        \begin{tikzpicture}
        \filldraw[mblue, draw=black,opacity=0.6]
        (0, 0) -- (0, -1) -- (2, -2) -- (0, 0);
        \fill (0, 0) circle (2pt);
        \node[above left=0.5pt of {(0, 0)}] {\half{$(0, 0)$}};
        \fill (0, -1) circle (2pt);
        \node[above=0.5pt of {(0, -1)}] {\half{$(0, -1)$}};
        \fill (2, -2) circle (2pt);
        \node[below=0.5pt of {(2, -2)}] {\half{$(2, -2)$}};
        \end{tikzpicture}
    \end{minipage}
    \begin{minipage}{0.24 \textwidth}
        \begin{center}
        \begin{tikzpicture}
        \filldraw[mblue, draw=black,opacity=0.6]
        (1, 0) -- (1, 2) -- (3, 0) -- (4, -1) -- (2, -1)-- (1, 0);
        \fill (1, 1) circle (2pt);
        \node[left=0.5pt of {(1, 1)}] {\scalebox{0.75}{$(1, 1)$}};
        \fill (1, 0) circle (2pt);
        \node[left=0.5pt of {(1, 0)}] {\scalebox{0.75}{$(1, 0)$}};
        \fill (1, 2) circle (2pt);
        \node[left=0.5pt of {(1, 2)}] {\scalebox{0.75}{$(1, 2)$}};
        \fill (3, 0) circle (2pt);
        \node[above right=0.5pt of {(3, 0)}] {\scalebox{0.75}{$(3, 0)$}};
        \fill (4, -1) circle (2pt);
        \node[below=0.5pt of {(4, -1)}] {\scalebox{0.75}{$(4, -1)$}};
        \fill (2, -1) circle (2pt);
        \node[below=0.5pt of {(2, -1)}] {\scalebox{0.75}{$(2, -1)$}};
        \fill (2, 0) circle (2pt);
        \node[above=0.5pt of {(2, 0)}] {\scalebox{0.75}{$(2, 0)$}};
        \fill (3, -1) circle (2pt);
        \node[below=0.5pt of {(3, -1)}] {\scalebox{0.75}{$(3, -1)$}};
        \end{tikzpicture}
        \end{center}
    \end{minipage}
    \caption{Projected Newton polytopes for the polynomial
    in~\cref{fig:maxpoly_sum}.
    Left and center: polytopes of the summands. Right: polytope of the sum.}
    \label{fig:newt_examples}
    \end{minipage} \quad
    \begin{minipage}{0.35 \textwidth}
        \centering
    	\begin{tikzpicture}[scale=1.85]

    	\coordinate (RA) at (0, 0);
    	\coordinate (RB) at (-1, 0);
    	\coordinate (RC) at (0, -1);
    	\coordinate (RD) at (0.707, 0.707);  

    	\draw[>=stealth, mred, thick] (RA) -- (RB);
    	\draw[>=stealth, mred, thick] (RA) -- (RC);
    	\draw[>=stealth, mred, thick] (RA) -- (RD);

    	\coordinate (sRA) at (0.5, -0.5);
    	\coordinate (sRB) at (0.5, -1.5);
    	\coordinate (sRC) at (-0.5, -0.5);
    	\coordinate (sRD) at (1.207, 0.207);

    	\draw[>=stealth, mblue, thick] (sRA) -- (sRB);
    	\draw[>=stealth, mblue, thick] (sRA) -- (sRC);
    	\draw[>=stealth, mblue, thick] (sRA) -- (sRD);

    		\node at (-0.25, 0.3) {{$\mathcal{R}_1$}};
    		\node at (0.25, -0.2) {{$\mathcal{R}_2$}};
    		\node at (-0.25, -0.2) {{$\mathcal{R}_3$}};
    		\node at (0.8, -0.6) {{$\mathcal{R}_4$}};
    		\node at (0.25, -0.75) {{$\mathcal{R}_5$}};
    		\node at (-0.25, -0.75) {{$\mathcal{R}_6$}};
    	\end{tikzpicture}
    	\caption{$V(p_1) \cup V(p_2)$ and corresponding linear regions}
    	\label{fig:union-hyp}
    \end{minipage}
\end{figure*}

\subsubsection{Upper bounds for Relu layers}
\cite{MPC+14} argue that a linear region in a ReLU layer corresponds to a
configuration of active units. Letting $\Nc^n_m$ denote the
number of linear regions of a ReLU layer with $n$ inputs and $m$ outputs, this
observation immediately gives $\Nc^n_m \leq 2^m$. Using the
notion of the Newton polytope, we can derive tighter bounds:
\begin{prop} \label{prop:relu_zonotope}
    Let $h_i(\bm{w}_i, b_i) = \max(0, \bm{w}_i^\top \bm{x} + b_i),\; i
    = 1, \dots m$ be an arbitrary collection of rectifier units. Then,
    the Minkowski sum $h_1 \oplus \dots \oplus h_m$ has at most
    $k$ nonparallel edges.
\end{prop}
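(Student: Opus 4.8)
The plan is to recognize each rectifier unit as a rank-$2$ tropical polynomial, so that its Newton polytope degenerates to a single line segment, and then to read off the edge directions of the resulting zonotope from the $m$ generating segments. (I note in passing that the ``$k$'' in the statement should read $m$, the number of rectifier units, since no $k$ is introduced; the proof below yields the bound $m$.)

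First I would compute $\Nc(h_i)$ for one unit. Writing $h_i(\bm{x}) = \max(0, \bm{w}_i^\top \bm{x} + b_i)$ exhibits it as a tropical polynomial with exactly two terms, whose coefficient vectors in $\R^{n+1}$ are $\begin{pmatrix} 0 \\ \bm{0} \end{pmatrix}$ and $\begin{pmatrix} b_i \\ \bm{w}_i \end{pmatrix}$. By the definition in~\eqref{eq:newton_polytope}, its Newton polytope is therefore the convex hull of these two points, i.e. the segment
\[
    \Nc(h_i) = \mathrm{conv}\left\{ \begin{pmatrix} 0 \\ \bm{0} \end{pmatrix},\ \begin{pmatrix} b_i \\ \bm{w}_i \end{pmatrix} \right\},
\]
a one-dimensional polytope whose direction is $\begin{pmatrix} b_i \\ \bm{w}_i \end{pmatrix}$. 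Applying the identity~\eqref{eq:tropmul_newtp} of Proposition~\ref{prop:polytope_algebra}, the Newton polytope of the layer's output $\sum_{i=1}^m h_i$ is the Minkowski sum $\Nc(h_1) \oplus \cdots \oplus \Nc(h_m)$ of these $m$ segments, which by Definition~\ref{defn:zonotope} is precisely a zonotope.

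Next I would establish the defining structural property of zonotopes: every edge of a Minkowski sum of segments is parallel to one of the generating segments. To do this I would fix an edge of the zonotope, expose it by some linear functional $\bm{c}$, and use the fact that the $\bm{c}$-exposed face of a Minkowski sum equals the Minkowski sum of the $\bm{c}$-exposed faces of the summands. Since each summand $\Nc(h_i)$ is a segment, its $\bm{c}$-exposed face is either a single vertex or the entire segment, the latter occurring only when $\bm{c}$ is orthogonal to $\begin{pmatrix} b_i \\ \bm{w}_i \end{pmatrix}$. For the Minkowski sum of these contributions to be one-dimensional, all summands contributing their full length must share a single common direction, and that shared direction is exactly the direction of the edge. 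Consequently every edge direction of the zonotope coincides with one of the $m$ generator directions $\begin{pmatrix} b_i \\ \bm{w}_i \end{pmatrix}$, so there are at most $m$ pairwise-nonparallel edges.

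The main obstacle I anticipate is making the ``exposed face of a Minkowski sum is the sum of exposed faces'' step precise and handling the degenerate situation where several of the segments $\Nc(h_i)$ are themselves mutually parallel (e.g. when some $\bm{w}_i$ are scalar multiples of one another). In that case several generators collapse onto a single edge direction, which only \emph{strengthens} the conclusion by reducing the count below $m$; thus it poses no difficulty for an upper bound. Once the face-decomposition is in hand, counting the distinct generator directions immediately gives at most $m$ nonparallel edges, completing the argument.
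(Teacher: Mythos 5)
Your proof is correct and follows essentially the same route as the paper's: each rectifier unit is a rank-$2$ tropical polynomial whose Newton polytope is a line segment, so the layer's Newton polytope is a zonotope generated by $m$ segments, and every edge of that zonotope is parallel to one of the generators. The paper's own proof simply asserts this last zonotope fact, which you justify properly via the exposed-face decomposition of Minkowski sums; you are also right that the ``$k$'' in the statement should read $m$.
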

\begin{proof}
    By definition, $\Nc(h_i)$ is a zonotope since $h_i$ is a rank-$2$ polynomial.
    Zonotopes are line segments, so the Minkowski sum of $k$ such zonotopes has
    at most $k$ nonparallel edges.
\end{proof}
Notice that Proposition~\ref{prop:relu_zonotope} still holds for leaky ReLUs,
in which case \[\Nc(h_i) = \convhull\set{
                \begin{pmatrix}
                    \alpha b \\ \alpha \bm{w}
                \end{pmatrix},
                \begin{pmatrix}
                    b \\ \bm{w}
                \end{pmatrix}}. \]

Assume we are given a collection of ReLUs (i.e. a layer). Each of these ReLUs
is a polynomial $p_i: \Rbb^n \to \Rbb$, therefore the total number of linear
regions is dual to the hypersurface of that collection of polynomials, which is
$V(p_1) \cup \dots V(p_m)$ (see~\cref{fig:union-hyp}).
By~\cref{eq:tropplus_hypersurf}, this is the same as $V\left(\sum_{i=1}^m
p_i\right)$, which by~\cref{eq:tropmul_newtp} is dual to $\Nc(p_1) \oplus \cdots
\oplus \Nc(p_m)$. The latter is itself a Newton polytope of a polynomial, hence
only vertices on its upper hull correspond to linear regions of the collection
$\set{p_i}_{i=1}^m$. Proposition~\ref{prop:linear_regions_bias_vertices}
specializes that fact to a single polynomial.

Theorem~\ref{thrm:edgotope_polytope_faces_inequality} together with
Prop.~\ref{prop:relu_zonotope} then suggest that:
\begin{equation}
  f_i(\Nc(h_1) \oplus \cdots \oplus \Nc(h_k))
  = 2 \binom{k}{i} \sum_{j=0}^{n - i} \binom{k - 1 - i}{j}
    \label{eq:relu_faces}
\end{equation}
Moreover, it is known that zonotopes are centrally symmetric (see
e.g.~\cite{BeckRob15}), which implies that their upper and lower hulls have the
same number of vertices. Consequently:
\begin{prop} \label{prop:relu_tight_bound}
	The number of linear regions of a ReLU/LReLU layer with $n$ inputs and
    $m$ outputs is upper bounded as
    \begin{equation}
    	\Nc_m^n \leq \min \left(
        	2^m, \sum_{j=0}^n \binom{m}{j}
        \right)
    \end{equation}
    Moreover, this bound is tight when the zonotopes
    corresponding to the ReLU activations, as well as the canonical projection
    to the last $n$ coordinates of its vertices, are in general position.
\end{prop}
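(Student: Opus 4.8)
The plan is to identify $\Nc_m^n$ with the number of vertices on the upper hull of the zonotope $Z := \Nc(h_1) \oplus \cdots \oplus \Nc(h_m) \subset \R^{n+1}$, exactly as the discussion preceding the statement sets up: by Prop~\ref{prop:relu_zonotope} the $m$ rank-$2$ activations contribute $m$ segment generators, so $Z$ is a zonotope with at most $m$ nonparallel edges, and by Prop~\ref{prop:linear_regions_bias_vertices} (applied to $\sum_i h_i$) the linear regions are in bijection with $\vtx(Z^{\max})$. It then suffices to bound $|\vtx(Z^{\max})|$ by each of $2^m$ and $\sum_{j=0}^n \binom{m}{j}$ separately. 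The bound $2^m$ is immediate: a zonotope on $m$ generators has at most $2^m$ vertices (one per choice of endpoint per generator), and its upper hull is a subset of its vertex set; equivalently, each linear region is pinned down by a binary activation pattern over the $m$ units.

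For the second bound I would feed $Z$ into~\eqref{eq:relu_faces}: taking $i = 0$ and $k = m$ yields $f_0(Z) \le 2 \sum_{j=0}^{n} \binom{m-1}{j}$. Since zonotopes are centrally symmetric, the reflection through the center of $Z$ bijects $\vtx(Z^{\max})$ with $\vtx(Z^{\min})$, so the two hulls carry equally many vertices. Writing $E := \vtx(Z^{\max}) \cap \vtx(Z^{\min})$ for the vertices lying on both hulls, and using that every vertex has a full-dimensional normal cone (which therefore cannot be contained in $\{c_0 = 0\}$, hence meets $\{c_0 > 0\}$ or $\{c_0 < 0\}$, placing the vertex on at least one hull), inclusion–exclusion gives $f_0(Z) = 2\,|\vtx(Z^{\max})| - |E|$, i.e. $|\vtx(Z^{\max})| = \tfrac12\big(f_0(Z) + |E|\big)$.

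The crux — and the step I expect to be the main obstacle — is counting the shared set $E$. These are precisely the vertices of the silhouette of $Z$ under the vertical projection $\pi$ onto the last $n$ coordinates, and in general position they are in bijection with $\vtx(\pi(Z))$. But $\pi(Z)$ is again a zonotope, now on $m$ generators in $\R^n$, so $|E| \le 2\sum_{j=0}^{n-1}\binom{m-1}{j}$; this is exactly where the second general-position hypothesis — on the projected vertices — is needed, to guarantee that the shadow boundary is simple and that no two generators project to parallel or degenerate segments. Substituting and applying Pascal's rule $\binom{m}{j} = \binom{m-1}{j} + \binom{m-1}{j-1}$ termwise collapses the two sums into $|\vtx(Z^{\max})| \le \sum_{j=0}^n \binom{m}{j}$, as claimed.

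Finally, for tightness I would observe that under the stated general-position assumptions every inequality above is an equality, so $\Nc_m^n = \sum_{j=0}^n \binom{m}{j}$ exactly; the same count arises most transparently as the number of chambers of the affine arrangement $\{\bm{w}_i^\top \bm{x} + b_i = 0\}_{i=1}^m$ in $\R^n$, which is the cleanest route to tightness and a useful cross-check on the zonotope computation. It then remains only to note that $\sum_{j=0}^n \binom{m}{j} \le 2^m$ for all $n$, with equality iff $n \ge m$, so the $\min$ in the statement is always governed by $\sum_{j=0}^n \binom{m}{j}$ and faithfully recovers the $2^m$ bound of~\cite{MPC+14} in the regime $n \ge m$ while strictly improving on it otherwise.
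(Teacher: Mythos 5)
Your proposal is correct and takes essentially the same route as the paper's own proof: reduce to counting upper-hull vertices of the zonotope $\Nc(h_1) \oplus \cdots \oplus \Nc(h_m)$, apply the zonotope face bound of Theorem~\ref{thrm:edgotope_polytope_faces_inequality} to both this polytope and its projection onto the last $n$ coordinates, account for vertices shared by the upper and lower hulls via central symmetry and inclusion--exclusion, and collapse the two binomial sums with Pascal's rule. Your additional remarks (the hyperplane-arrangement cross-check for tightness and the observation that the binomial sum always governs the $\min$) go slightly beyond what the paper states, but the core argument is the same.
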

\begin{proof}
By the preceding discussion, it is clear than a ReLU layer with $m$ outputs
defines a union of $m$ hypersurfaces, $\setU_{i=1}^m V(h_i)$.
By Prop.~\ref{prop:polytope_algebra}, this is equal to $V(\sum_{i=1}^m h_i)$.
Therefore, it suffices to upper bound the number of vertices on the upper hull
of
\[
    \Nc\left(\sum_{i=1}^m h_i\right) =
    \Nc(h_1) \oplus \dots \oplus \Nc(h_m).
    \qquad (\text{By}~\eqref{eq:tropmul_newtp})
\]
From then, the proof is an application of
Theorem~\ref{thrm:edgotope_polytope_faces_inequality},
Prop.~\ref{prop:relu_zonotope} and
Prop.~\ref{prop:edgotope_polytope_inequality}, in which the inequality is tight
since $Z(P_i) = P_i$ for any zonotope $P_i$. Notice that a zonotope $P$ being
centrally symmetric means that its lower and upper hulls have the same number
of vertices, say $n_{\ell} = n_u = n$. However, its total number of vertices
$\abs{\vtx(P)} \neq 2n$ in general, since it's possible to have vertices in
both the lower and upper hulls at the same time, as~\cref{fig:lower-upper-hull}
shows. Another example of such a zonotope is the $\ell_1$-ball in $d \geq 2$
dimensions.

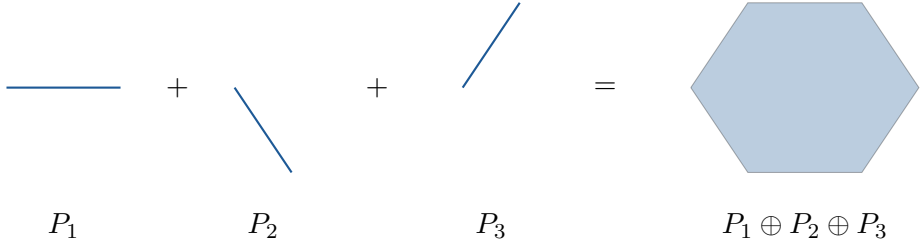
\begin{figure}[h!]
    \centering
 	\begin{tikzpicture}[scale=1.5]
		\draw[draw=hblue, thick, -] (0, 0) -- (1, 0);
		\node at (1.5, 0) {$+$};
		\draw[draw=hblue, thick, -] (2, 0) -- (2.5, -0.75);
		\node at (3.25, 0) {$+$};
		\draw[draw=hblue, thick, -] (4, 0) -- (4.5, 0.75);
		\node at (5.25, 0) {$=$};
        \filldraw[hblue, draw=black, opacity=0.3]
			(6, 0) -- (6.5, 0.75) -- (7.5, 0.75) -- (8, 0) -- (7.5, -0.75) --
				(6.5, -0.75) -- (6, 0);
		\node[below=1pt of {(0.5, -1)}] {$P_1$};
		\node[below=1pt of {(2.25, -1)}] {$P_2$};
		\node[below=1pt of {(4.25, -1)}] {$P_3$};
		\node[below=1pt of {(7, -1)}] {$P_1 \oplus P_2 \oplus P_3$};
 	\end{tikzpicture}
 	\caption{Zonotope with vertices in both envelopes.}
 	\label{fig:lower-upper-hull}
\end{figure}
Denote $P^{\max}, P^{\min}$ for the upper and lower hulls respectively.
A vertex $\bm{v} \in P^{\max} \cap P^{\min}$ if it is also a vertex for the
canonical projection of $P \in \Rbb^n$ to the last $ n - 1 $ coordinates,
denoted by $P'$.
Therefore:
\begin{align}
	\abs{\vtx(P)} &= \abs{\vtx(P^{\max})} + \abs{\vtx(P^{\min})} - \abs{
		\vtx(P')} \\
		&= 2n - \abs{\vtx(P')} \Rightarrow
		   n = \frac{\abs{\vtx(P)} + \abs{\vtx(P')}}{2}.
		   \label{eq:upper-hull-sum}
\end{align}
Theorem~\ref{thrm:edgotope_polytope_faces_inequality} applied for $P$ and $P'$
tells us that the right hand side in~\cref{eq:upper-hull-sum} is bounded above
by
\begin{align}
	\sum_{j=0}^n \binom{m - 1}{j} &+ \sum_{j=0}^{n - 1} \binom{m - 1}{j}
	\label{eq:two-zonotope-upper-bounds}
	= 1 + \sum_{j=1}^n \binom{m - 1}{j} + \binom{m - 1}{j - 1} \\
	&= 1 + \sum_{j=1}^n \binom{m}{j} = \sum_{j=0}^n \binom{m}{j},
\end{align}
where we've made use of the identity $\binom{n}{k} = \binom{n-1}{k} +
\binom{n-1}{k-1}$. This gives one part of the claimed bound. The other part
of the claimed bound follows from the argument in~\cite{MPC+14}, i.e. the number
of possible ReLU patterns is bounded above by $2^m$. The claim follows.
\end{proof}

The result above assumes a fully-connected neural network layer. It is
straightforward to obtain a similar bound for convolutional layers.
For a convolutional layer, one may write $\bm{y} = \bm{W}
\mathrm{vec}(\bm{X})$, where $\mathrm{vec}(\cdot)$ ``reshapes'' its argument
into a single vector, and deduce the following:
\begin{corl} \label{corl:conv-layer}
	The number of linear regions of a single-channel ReLU/LReLU convolutional
	layer with filter size $k$ and padding $p$, applied on square images of
	size $d^2$, is upper bounded by
	\[
	\min \left(
	2^{(d - k + 2p + 1)^2},
	\sum_{j=0}^{d^2} \binom{(d - k + 2p + 1)^2}{j}
	\right).
	\]
\end{corl}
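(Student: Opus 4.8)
The plan is to reduce the convolutional case to the fully-connected case already settled by Proposition~\ref{prop:relu_tight_bound}. The key observation is that a single-channel convolutional layer is, after vectorizing the input image, \emph{literally} a fully-connected ReLU layer whose weight matrix $\bm{W}$ happens to have a special (sparse, Toeplitz-like) structure. Since Proposition~\ref{prop:relu_tight_bound} places \emph{no} restriction on the weights of the ReLU units—it holds for an arbitrary collection $h_i(\bm{x}) = \max(0, \bm{w}_i^\top \bm{x} + b_i)$—the bound applies verbatim once we correctly identify the number of inputs $n$ and the number of outputs $m$ for the reshaped layer.

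First I would make the reshaping precise. Writing $\bm{y} = \bm{W}\,\mathrm{vec}(\bm{X})$, the input is the flattened image $\mathrm{vec}(\bm{X}) \in \Rbb^{d^2}$, so the number of inputs is $n = d^2$. Each output coordinate $y_i$ is a single affine form composed with a ReLU, i.e. one rectifier unit, so the number of outputs $m$ equals the number of spatial positions at which the filter is applied. Second I would compute that count: for a square image of side $d$, a square filter of side $k$, and padding $p$ applied per side, the standard convolution-arithmetic formula gives an output spatial dimension of $d - k + 2p + 1$ along each axis (with unit stride), hence $m = (d - k + 2p + 1)^2$ output units. Substituting $n = d^2$ and this value of $m$ directly into the bound $\min\bigl(2^m, \sum_{j=0}^n \binom{m}{j}\bigr)$ of Proposition~\ref{prop:relu_tight_bound} yields exactly the claimed expression.

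The only substantive issue—and the place where I would be most careful—is verifying that nothing about the convolutional structure invalidates the hypotheses of Proposition~\ref{prop:relu_tight_bound}. The proposition's \emph{upper} bound requires no genericity or general-position assumption: the edgotope inequality (Prop.~\ref{prop:edgotope_polytope_inequality}) and Theorem~\ref{thrm:edgotope_polytope_faces_inequality} hold for \emph{any} polytopes, and each $\Nc(h_i)$ is a line segment regardless of how the weights $\bm{w}_i$ are constrained. Thus the weight-sharing and sparsity of convolution only restrict the \emph{directions} of the $m$ zonotope edges, which can never increase the number of faces of their Minkowski sum; the upper bound is therefore inherited without modification.

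Consequently the corollary is genuinely a direct specialization, and the main obstacle is purely bookkeeping: getting the output-dimension formula $d - k + 2p + 1$ right (in particular fixing the stride-one, per-side-padding conventions) so that the substitution produces precisely the stated bound. I would therefore present the proof as a short paragraph stating the reshaping $\bm{y} = \bm{W}\,\mathrm{vec}(\bm{X})$, identifying $n = d^2$ and $m = (d-k+2p+1)^2$, and invoking Proposition~\ref{prop:relu_tight_bound} to conclude.
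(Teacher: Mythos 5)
Your proposal is correct and follows essentially the same route as the paper: write the convolution as the affine map $\bm{y} = \bm{W}\,\mathrm{vec}(\bm{X})$, identify the layer as a fully-connected ReLU layer with $n = d^2$ inputs and $m = (d - k + 2p + 1)^2$ outputs, and invoke Proposition~\ref{prop:relu_tight_bound}. Your explicit check that the upper bound requires no genericity assumption—so the sparse, weight-shared structure of $\bm{W}$ cannot invalidate it—is a point the paper leaves implicit, but it does not change the argument.
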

\begin{proof}
A convolutional layer applies a 2D convolution to the set of input images
\[
	\set{\bm{X}_i}_{i=1}^n, \; \bm{X}_i \in \Rbb^{d_w \times d_h}, \;
\]
where $d_w, d_h$ are the width and height of the images (assume
single-channel). Equivalently, $m$ filters of size $k \times k$ are applied to
$\bm{X}_i$ on (possibly) overlapping regions. We now assume that those regions
are separated by a stride of size $1$, but our analysis extends in a
straightforward way to the case where we have larger strides. In practice,
images are also zero-padded by $p$ pixels.

When the conv-layer's activations are ReLUs or leaky ReLUs, our previous
arguments apply in a straightforward fashion. The dimension of the output is
$d_{\mathrm{out}} = (d_w + 2p - k + 1) \times (d_h + 2p - k + 1)$.
The convolution operation is an affine mapping $\bm{X} \mapsto \bm{W}
\mathrm{vec}(\bm{X}) + \bm{b}$, where $\mathrm{vec}(\bm{X})$
denotes the vectorization of $\bm{X}$. The weight matrix has at least $1$ and
at most $k^2$ elements on every row. By our previous arguments, this will
result in a collection of $d_{\text{out}}$ tropical signomials. The case of
interest is square images with $d_w = d_h = d$, which results in
$d_{\mathrm{in}} = d^2, \; d_{\mathrm{out}} = (d - k + 2p + 1)^2$. Then, an
application of Prop.~\ref{prop:relu_tight_bound} gives the result.
\end{proof}

\subsubsection{Upper bounds for Maxout layers}
By a similar argument, we can recover bounds for maxout units. Let
$h(\bm{x})$ be a maxout activation of rank $k$, which defined at most $k$
linear regions; by our observation its Newton polytope will have at most $k$
vertices. Therefore, the maximal number of edges it will contain is
\(
\binom{k}{2} = \frac{k(k-1)}{2}
\).
If we also assume that all the edges of all $m$ polytopes are
in general position, we immediately arrive at
\begin{corl}
	The linear regions of a maxout layer of $n$ inputs and $m$ outputs,
    using units of rank $k$, are upper bounded by
    \begin{equation}
    	\min \left(
        	k^m,
            2 \cdot \sum_{j=0}^n \binom{m \cdot \frac{k(k-1)}{2}}{j}
        \right)
    \end{equation}
    \label{corl:maxout_tight_bound}
\end{corl}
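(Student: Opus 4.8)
The plan is to mirror the ReLU argument of Proposition~\ref{prop:relu_tight_bound}, replacing its central-symmetry step (which fails here) by a direct application of the edgotope inequality. First I would recall that, by Proposition~\ref{prop:linear_regions_bias_vertices} together with Proposition~\ref{prop:polytope_algebra}, the linear regions of the layer $h_1, \dots, h_m$ are in bijection with the vertices lying on the upper hull of the Newton polytope $\Nc(h_1) \oplus \dots \oplus \Nc(h_m)$, which lives in $\Rbb^{n+1}$ since the bias coordinate is retained. Hence it suffices to bound the total number of vertices $f_0$ of this Minkowski sum, because the upper-hull vertices are a subset of all vertices.

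Next I would count edges. Each maxout unit $h_i$ has rank $k$, so $\Nc(h_i)$ has at most $k$ vertices and therefore at most $\binom{k}{2} = \tfrac{k(k-1)}{2}$ edges. Consequently each edgotope $Z(\Nc(h_i))$ is a zonotope generated by at most $\tfrac{k(k-1)}{2}$ line segments, and under the general-position assumption the edgotope $Z(\Nc(h_1)) \oplus \dots \oplus Z(\Nc(h_m))$ is a zonotope with $M := m \cdot \tfrac{k(k-1)}{2}$ pairwise nonparallel generators.

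Then I would chain the inequalities. By Proposition~\ref{prop:edgotope_polytope_inequality},
\[
f_0\!\left( \Nc(h_1) \oplus \dots \oplus \Nc(h_m) \right) \leq f_0\!\left( Z(\Nc(h_1)) \oplus \dots \oplus Z(\Nc(h_m)) \right),
\]
and Theorem~\ref{thrm:edgotope_polytope_faces_inequality}, applied with $i = 0$, ambient dimension $n+1$, and $M$ nonparallel edges, bounds the right-hand side by $2 \sum_{j=0}^{n} \binom{M-1}{j} \leq 2 \sum_{j=0}^{n} \binom{M}{j}$, which is the combinatorial term of the claimed minimum. For the other term I would reuse the pattern-counting argument of~\cite{MPC+14}: each of the $m$ units selects one of its $k$ affine pieces on any given region, so there are at most $k^m$ activation patterns and hence at most $k^m$ linear regions. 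Taking the minimum of the two bounds yields the corollary.

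The main obstacle is that, unlike the ReLU case, the summands $\Nc(h_i)$ are genuine polytopes rather than segments, so the Minkowski sum is not centrally symmetric and the upper/lower hull halving trick of~\eqref{eq:upper-hull-sum} is unavailable; this is precisely why the maxout bound carries the extra factor of $2$ that the ReLU bound does not. A secondary point to handle carefully is keeping the ambient dimension at $n+1$ throughout and justifying that the edgotope's generators may be taken pairwise nonparallel under the stated general-position hypothesis, so that the zonotope vertex count of Theorem~\ref{thrm:edgotope_polytope_faces_inequality} applies with the full $M$ generators.
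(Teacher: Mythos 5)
Your proposal is correct and follows essentially the same route as the paper: bound the regions by the vertices of $\Nc(h_1) \oplus \dots \oplus \Nc(h_m)$ in $\Rbb^{n+1}$, count at most $\binom{k}{2}$ edges per rank-$k$ Newton polytope, chain Proposition~\ref{prop:edgotope_polytope_inequality} with Theorem~\ref{thrm:edgotope_polytope_faces_inequality} at $i=0$ with $M = m\cdot\frac{k(k-1)}{2}$ nonparallel generators, and take the minimum with the $k^m$ activation-pattern bound. Your remark that the central-symmetry halving trick is unavailable here is exactly the paper's own observation on why the factor of $2$ cannot be removed, and your handling of the ambient dimension $n+1$ and the relaxation $\binom{M-1}{j}\leq\binom{M}{j}$ makes explicit steps the paper leaves implicit.
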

The same bound holds for the linear regions of
\[
	g_{+}(\bm{x}) = \sum_{i=1}^m w_i \cdot h_i(\bm{x}), \quad \bm{w} \geq 0,
\]
when $\set{h_i}_{i=1}^m$ are rank-$k$ tropical polynomials, since $\Nc(g_+)$ is
the Minkowski sum of scaled Newton polytopes of $h_i$. Notice that we cannot
refine the binomial sum in Corollary~\ref{corl:maxout_tight_bound}, as the
resulting Newton polytope is not necessarily centrally symmetric.

\section{Counting linear regions in practice}
In this section, we provide a computational method to measure the expressive
power of a neural network layer, by enumerating its linear regions. In contrast
to approaches relying on mixed-integer programming (MIP) such
as~\cite{SerTjaRam18,SerRam18}, which usually
assume that the input data are bounded in some range, we make no such
assumption here.

Suppose we are given $m$ piecewise-linear activation functions
$\set{h_i}_{i=1}^m$ such that
$
    h_i = \bigvee_{j=1}^{k_i} \bm{W}_{i,j}^\top \bm{x} + b_{i,j}.
$
Knowing $h_i$ immediately gives us a (not necessarily minimal)
$\cV$-representation of the corresponding polytope $P_i =
\convhull\set{\begin{pmatrix} \bm{W}_{i,1} \\ b_{i,1} \end{pmatrix}, \dots,
\begin{pmatrix} \bm{W}_{i, k_i} \\ b_{i, k_i}
\end{pmatrix}}$. It thus suffices to compute the number of vertices
in the upper hull of the Minkowski sum $P_1 \oplus \dots \oplus P_m$.

\paragraph{Exact counting for a single layer.}
It is widely known that the extreme points of Minkowski sums of polytopes are
sums of extreme points of the individual polytopes. Additionally, there exist
algorithms for enumerating vertices of Minkowski sums of polytopes $P_1, \dots
P_m$ when the $\cV$-representation of the $P_i$'s is available: this has become
widely known as the \textit{reverse search} method~\cite{AviFuk96,Fukuda04}.

Theorem 3.3 in~\cite{Fukuda04} proves the existence of a polynomial
algorithm for enumerating the vertices of $P := P_1 \oplus \dots \oplus P_m$
in time $\cO(\sum_i \delta_i \mathrm{LP}(n, \delta)
\abs{\mathrm{vert}(P)})$, where is $\delta_i$ the maximum degree of the vertex
adjacency graph of $P_i$ and
$\mathrm{LP}(n, \delta)$ denotes the time required to solve a linear program
(LP) in $n$ variables and $\delta$ inequalities. Combined with our estimates, that implies straightforward bounds for exact counting of the linear regions of ReLU/LReLU/Maxout layers. In our case, $\delta = 2m$ for ReLU/LReLU layers and $\delta = \sum_i k_i$ in the case of general convex PWL functions.

Let us briefly address the issue of having a non-minimal $\cV$ representation
for some of the polytopes $P_i$. In the case of a ReLU/LReLU network, all
polytopes $P_i$ will be edgotopes, which will admit a minimal $\cV$
representation unless $\bm{W}_{i} = 0$. In the case of a Maxout network,
we can eliminate redundant terms by solving $k_i$ LPs (see~\cite{ParLiHag95}
for more details).

Unfortunately, counting the vertices using reverse search requires solving
a prohibitive number of LPs, rendering the approach outlined
above impractical. Recent approaches count linear regions using mixed-integer
formulations that effectively identify the activation patterns
of rectifier networks (e.g.~\cite{SerRam18}). We attack this problem
from a different angle, by considering the ``dual'' problem of counting
vertices of convex polytopes by sampling.

\begin{figure*}[htb!]
	\centering
	\begin{minipage}{0.4 \linewidth}
		\centering
		\begin{tikzpicture}[thick,scale=1]
		\coordinate (A1) at (0, 0);
		\coordinate (A2) at (2, 1);
		\coordinate (A3) at (1, 3);
		\coordinate (A4) at (-1.5, 2);
		\fill[hblue, opacity=0.3] (A1) -- (A2) -- (A3) -- (A4) -- cycle;
		\draw[dashed, opacity=0.6] (A1) -- (A2) -- (A3) -- (A4) -- cycle;
		\draw (0.2, 1.5) node {$P$};
		\end{tikzpicture}
		\caption{Regular solid angles}
		\label{fig:angle-good}
	\end{minipage}
	\qquad
	\begin{minipage}{0.4 \linewidth}
		\centering
		\begin{tikzpicture}[thick,scale=1]
		\coordinate (A1) at (0, 0);
		\coordinate (Aext) at (1, 0.3);
		\coordinate (A2) at (2, 1);
		\coordinate (A3) at (1, 3);
		\coordinate (A4) at (-1.5, 2);
		\coordinate (B1) at (1.3, -0.7);
		\coordinate (B2) at (1.6, -0.5571);
		\fill[darkorange, opacity=0.3] (Aext) -- (B1) -- (B2) -- cycle;
		\draw[dotted] (Aext) -- (B1);
		\draw[dotted] (Aext) -- (B2);
		\fill[hred, opacity=0.3] (A1) -- (Aext) -- (A2) -- (A3) --
		(A4) -- cycle;
		\draw[dashed, opacity=0.6] (A1) -- (Aext) -- (A2) -- (A3) -- (A4) --
		cycle;
		\draw [dotted, opacity=0.6] (A1) -- (A2);
		\draw (0.2, 1.5) node {$Q$};
		\draw (1, 0.25) node[below left] {$\bm{v}_i$};
		\draw (1.6, -0.5571) node[right] {$\color{darkorange} N_Q(\bm{v}_i)$};
		\end{tikzpicture}
		\caption{$\omega(N_Q(\bm{v}_i)) \ll 1$}
		\label{fig:angle-bad}
	\end{minipage}
\end{figure*}

\subsection{A sampling method for polytopes}
We briefly present a randomized heuristic for ``sampling'' the extreme points
of the upper hull of a polytope $P = P_1 \oplus \dots \oplus P_m$. We generate
$K$ standard normal vectors, i.e. $\bm{g}^k \sim_{\mathrm{i.i.d}}\normal(\bm{0},
\bm{I})$ and compute $\ip{\bm{g}^k, \bm{v}_i}, \; \forall$ extreme point $\bm{v}_i$. We record the minimizers/maximizers for each polytope $P_j$ and repeat the trial.
This gives us a lower bound for the total number of vertices in the Minkowski
sum, since it is well-known that extreme points of a polytope are maximizers of
linear functionals over it, and extreme points of Minkowski sums maximize the
same linear functional over all individual summands. Let
\[
	\bm{V}_i = \begin{pmatrix}
		{\bm{v}^i_1} &
		\dots &
		{\bm{v}^i_{k_i}}
	\end{pmatrix}^\top \in \Rbb^{k_i \times n}, \; \forall i \in [m],
\]
each row of which is a vertex of $P_i$. By convention, the first coordinate of
each row contains the bias term. Our proposed
method,~\cref{alg:extreme-point-counting}, leverages the techniques
in~\cite{DamSun17}. We stress that this method and its specialization to upper
hulls,~\cref{alg:extreme-point-counting-upper-hull},
work for \textit{general}
polytopes, while the mixed-integer-program based methods in the
literature are only presented for rectifier networks.

\begin{algorithm}
    \begin{algorithmic}
    \State {\bfseries Input:} polytopes $P_1, \dots, P_m$ in
    $\cV$-representation
    \State $I_{\mathrm{ext}} := \emptyset$.
	\For{$j = 1, \dots, K$}
		\State Sample $\bm{g}_j \sim \normal(\bm{0}, \bm{I}_n)$
		\State Compute $\bm{z}^i := \bm{V}_i \bm{g}_j, \; \forall i \in [m]$.
		\State Collect $\bm{z}_{\max} := (\argmax \bm{z}^1, \dots, \argmax
		\bm{z}^m)$, $\bm{z}_{\min} := (\argmin \bm{z}^1, \dots, \argmin \bm{z}^m)$.
		\State $I_{\mathrm{ext}} := I_{\mathrm{ext}} \cup
		\set{ \bm{z}_{\max}, \bm{z}_{\min} }$
	\EndFor
    \end{algorithmic}
	\caption{Sampling points in the convex hull}
	\label{alg:extreme-point-counting}
\end{algorithm}
\cref{alg:extreme-point-counting} provides a nontrivial lower bound to the
number of extreme points of the resulting Minkowski sum with high probability,
as Proposition~\ref{prop:hp-estimates} shows.

\begin{prop} \label{prop:hp-estimates}
	Let $N = \abs{\mathrm{vert}\left(P_1 \oplus \dots \oplus P_m\right)}$
    and denote
    \[
        \tilde{N} = \log\left( \frac{1}{
        \max_k \left(1 - 2 \omega(N_P(\bm{v}_k))\right)} \right)
        \geq \frac{N}{2}.
    \]
	Then, for $K \geq \tilde{N} \log(N / \delta)$
    in~\cref{alg:extreme-point-counting}, the algorithm counts all the
    vertices with probability at least $1 - \delta$.
\end{prop}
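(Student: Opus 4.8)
The plan is to treat \cref{alg:extreme-point-counting} as a coupon–collector process over the vertices of the Minkowski sum $P := P_1 \oplus \dots \oplus P_m$, where the ``coupon'' for a vertex is collected whenever a sampled Gaussian direction lands in the corresponding normal cone. First I would make precise what the algorithm records. Since the extreme points of a Minkowski sum are sums of extreme points of the summands that maximize a common linear functional, for almost every direction $\bm{g}$ the tuple $(\argmax \bm{V}_1 \bm{g}, \dots, \argmax \bm{V}_m \bm{g})$ is uniquely defined (no ties occur, as $\bm{g}$ has a continuous law) and identifies exactly the vertex of $P$ that maximizes $\bm{g}$; the analogous statement holds for the minimizer tuple. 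Hence, recording $\bm{z}_{\max}$ and $\bm{z}_{\min}$ in trial $j$ discovers the vertex $\bm{v}_k$ of $P$ precisely when $\bm{g}_j \in N_P(\bm{v}_k)$ (it is the maximizer) or $-\bm{g}_j \in N_P(\bm{v}_k)$ (it is the minimizer).

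Next I would compute the per-trial discovery probability of a fixed vertex $\bm{v}_k$. By \Cref{defn:solid-angle} we have $\prob{\bm{g}_j \in N_P(\bm{v}_k)} = \omega(N_P(\bm{v}_k))$, and by symmetry of the centered Gaussian, $\prob{-\bm{g}_j \in N_P(\bm{v}_k)} = \omega(N_P(\bm{v}_k))$ as well. Because $P$ is full-dimensional the normal cone at a vertex is pointed, so $N_P(\bm{v}_k) \cap (-N_P(\bm{v}_k))$ has Lebesgue (hence Gaussian) measure zero; the two events are therefore disjoint up to a null set, and the single-trial discovery probability is
\[
	p_k = \prob{\bm{g}_j \in N_P(\bm{v}_k) \cup (-N_P(\bm{v}_k))} = 2\,\omega(N_P(\bm{v}_k)).
\]
As the $K$ trials are i.i.d., the probability that $\bm{v}_k$ is never recorded is $(1 - 2\omega(N_P(\bm{v}_k)))^K$.

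I would then finish with a union bound. Writing $q := \max_k \left(1 - 2\omega(N_P(\bm{v}_k))\right)$, the probability that at least one vertex is missed is at most $\sum_{k=1}^N (1 - 2\omega(N_P(\bm{v}_k)))^K \le N q^K$. Requiring $N q^K \le \delta$ and taking logarithms yields the threshold $K \ge \log(N/\delta)/\log(1/q) = \tilde{N}\log(N/\delta)$, where $\tilde N$ is the reciprocal of $\log(1/q)$. The lower bound $\tilde N \ge N/2$ then follows from the corollary that the solid angles at the vertices satisfy $\sum_{\bm{v}\in\vtx(P)}\omega(N_P(\bm v)) = 1$: averaging over the $N$ vertices gives $\min_k \omega(N_P(\bm v_k)) \le 1/N$, hence $q \ge 1 - 2/N$, and the elementary bound $-\log(1-x) \le x/(1-x)$ applied to $\log(1/q) = -\log(1 - 2\min_k \omega(N_P(\bm v_k)))$ shows $\log(1/q)$ is of order $2/N$, so its reciprocal $\tilde N$ is at least of order $N/2$.

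The main obstacle, and the step I would be most careful about, is the correspondence in the first paragraph: verifying that the per-summand $\argmax$/$\argmin$ tuples recorded by the algorithm are almost surely in bijection with the maximizing/minimizing vertices of the full Minkowski sum, so that ``the direction hits the normal cone of $\bm v_k$'' and ``the algorithm records $\bm v_k$'' are literally the same event. This rests on the fact that a vertex of $P_1 \oplus \dots \oplus P_m$ admits a \emph{unique} decomposition as a sum of vertices of the summands, together with the almost-sure uniqueness of the maximizer under a continuous sampling law. Once this identification is in place, the factor of $2$ from Gaussian symmetry, the null-measure overlap of antipodal cones, and the union bound are all routine.
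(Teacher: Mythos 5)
Your proposal is correct and takes essentially the same route as the paper's own proof: a union bound over the $N$ vertices, with the per-vertex miss probability bounded by $\left(1 - 2\,\omega(N_P(\bm{v}_k))\right)^K$ via Gaussian symmetry and the fact that a normal cone and its negation overlap only in a null set. You are in fact more careful than the paper at the points it glosses over: you justify the almost-sure identification of the recorded $\argmax$/$\argmin$ tuples with vertices of the Minkowski sum, you supply an argument for the asserted inequality $\tilde{N} \geq N/2$ (precisely, your estimate yields $\tilde{N} \geq N/2 - 1$, which is all that is actually true in the worst case), and in solving $N \max_k \left(1 - 2\omega(N_P(\bm{v}_k))\right)^K \leq \delta$ you rightly observe that the threshold $K \geq \tilde{N}\log(N/\delta)$ requires $\tilde{N}$ to be the \emph{reciprocal} of $\log\left(1/\max_k\left(1 - 2\omega(N_P(\bm{v}_k))\right)\right)$---the displayed definition of $\tilde{N}$ in the proposition omits this reciprocal, a typo that your derivation silently corrects.
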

\begin{proof}
    An extreme point of a Minkowski sum is necessarily a sum of extreme points
    of individual summands. Each time we draw a random sample
    $\bm{g}_j$ and record the minimizers of $\set{\bm{V}_i \bm{g}_j}_{i \in
    [m]}$, we are recording one possible extreme point of $P_1 \oplus \dots
    \oplus  P_m$. Consequently, missing a ``configuration'' of minimizers
    across our trials is equivalent to missing an extreme point $\bm{v}$ of the
    Minkowski Sum.

    Enumerate the individual vertices as $\bm{v}_1, \dots,
    \bm{v}_{N}$. Then,
    \begin{align}
        \prob{\text{fail}} &=
        \prob{\setU_{k=1}^{N} \text{miss } \bm{v}_k}
        \overset{(\text{union bound})}{\leq}
        \sum_{k=1}^{N} \prob{\text{miss } \bm{v}_k}
        \label{eq:miss-union-bound}
    \end{align}
    ``Missing'' $\bm{v}_k$ means that it was not a minimizer for any functional
    $\ip{\bm{g}_j, \cdot}$; equivalently (by independence across samples):
    \begin{align}
        \prob{\text{miss } \bm{v}_k} = \prob{\setI_{j = 1}^K \set{
        \pm \bm{g}_j \notin N_P(\bm{v}_k)}}
        \nonumber \\
        = \prod_{j=1}^K \left[ 1 - \prob{\pm \bm{g}_j \in N_P(\bm{v}_k)} \right]
        \leq
        \left(1 - 2 \omega(N_P(\bm{v}_k))\right)^K \\
        \Rightarrow \prob{\text{miss a vertex}} \leq N
        \max_k \left(1 - 2 \omega(N_P(\bm{v}_k))\right)^K
    \end{align}
    If we require the above to be less than $\delta$, we obtain
    $ \delta \geq N \max_k \left(1 - 2 \omega(N_P(\bm{v}_k))\right)^K$,
    which gives the result.
\end{proof}
\vspace*{-2pt}
Our guarantee heavily depends on the cones $N_P(\bm{v}_k)$. If there are
vertices that only slightly ``extend'' out of the polytope, our required
sample size will be a large multiple of
$N$.~\Cref{fig:angle-good,fig:angle-bad} illustrate (non-zonotopal)
examples in $\Rbb^2$; $Q$ has a vertex where the solid angle of the normal cone
is close to $0$, in contrast to $P$ which is more ``regular''.
If one can ``get away'' with computing a lower bound on the actual number
of linear regions, a similar guarantee is available; instead of
the exact number of linear regions we may consider a threshold $\frac{1}{2} >
\eta > 0$ and the set $\cV_{\eta} := \set{\bm{v}_i \in \mathrm{vert}(P) \mmid
\omega(N_P(\bm{v}_i)) \geq \eta}$; informally, $\cV_{\eta}$ is the set of
vertices whose normal cones' angles are not ``too small'.
\begin{corl} \label{corl:random-lower-bound}
	Let $\eta$ be such that $\abs{\cV_{\eta}} \geq c N$, for some $c \in [0,
	1]$.
	Then Algorithm~\ref{alg:extreme-point-counting} counts at least $c N$
	vertices with probability at least $1 - \delta$, for $K \geq
	\frac{1}{2 \eta} \log \frac{N}{\delta}$.
\end{corl}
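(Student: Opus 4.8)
The plan is to recycle the per-vertex ``miss'' probability already established in the proof of Proposition~\ref{prop:hp-estimates}, and simply change the \emph{set} over which the union bound is taken: instead of ranging over all $N$ vertices of $P$, I would range only over the ``good'' vertices in $\cV_{\eta}$. First I would recall that, for any single vertex $\bm{v}_k$, the argument preceding~\eqref{eq:miss-union-bound} gives $\prob{\text{miss } \bm{v}_k} \leq \left(1 - 2\omega(N_P(\bm{v}_k))\right)^K$, using both $+\bm{g}_j$ and $-\bm{g}_j$ (the two factors of $\omega$), and that this estimate holds verbatim here since its derivation does not involve $\eta$ at all.

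Next I would specialize this bound to vertices $\bm{v}_k \in \cV_{\eta}$. By the definition of $\cV_{\eta}$ we have $\omega(N_P(\bm{v}_k)) \geq \eta$, hence $1 - 2\omega(N_P(\bm{v}_k)) \leq 1 - 2\eta$ and therefore $\prob{\text{miss } \bm{v}_k} \leq (1-2\eta)^K$ uniformly over $\cV_{\eta}$. Applying the union bound over $\cV_{\eta}$, whose cardinality is at most $N$, yields $\prob{\text{miss some } \bm{v}_k \in \cV_{\eta}} \leq \abs{\cV_{\eta}}\,(1-2\eta)^K \leq N\,(1-2\eta)^K$. It then remains to choose $K$ so that the right-hand side is at most $\delta$: using $1 - 2\eta \leq e^{-2\eta}$ (valid for $0 < \eta < \tfrac{1}{2}$), I would bound $N(1-2\eta)^K \leq N e^{-2\eta K}$, and solving $N e^{-2\eta K} \leq \delta$ gives precisely $K \geq \frac{1}{2\eta}\log\frac{N}{\delta}$. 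For such $K$, with probability at least $1-\delta$ the algorithm records every vertex of $\cV_{\eta}$, and since $\abs{\cV_{\eta}} \geq cN$ by hypothesis, it counts at least $cN$ vertices, as claimed.

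I do not expect a substantive obstacle; this is essentially a weakening of Proposition~\ref{prop:hp-estimates}. The one point to state carefully is that passing from the vertex-dependent quantity $\max_k\left(1 - 2\omega(N_P(\bm{v}_k))\right)$ of Proposition~\ref{prop:hp-estimates} to the uniform factor $1 - 2\eta$ is exactly what the restriction to $\cV_{\eta}$ buys us: vertices with very small solid angle (which inflated $\tilde N$ in the full-count guarantee) are simply dropped from the union bound. This is harmless because we only assert a lower bound of $cN$ counted vertices, so whether the discarded vertices outside $\cV_{\eta}$ happen to be recorded is immaterial to the claim.
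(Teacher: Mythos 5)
Your proposal is correct and follows essentially the same route as the paper's own proof: the same per-vertex miss bound inherited from Proposition~\ref{prop:hp-estimates}, a union bound restricted to $\cV_{\eta}$ (with $\abs{\cV_{\eta}} \leq N$), the inequality $1 - x \leq e^{-x}$, and the same solve for $K$. The only difference is cosmetic — you replace $1-2\omega(N_P(\bm{v}))$ by the uniform factor $1-2\eta$ before exponentiating, while the paper exponentiates first and then takes the minimum over $\cV_{\eta}$ — and your closing remark making explicit that counting all of $\cV_{\eta}$ yields at least $cN$ vertices is a small clarity improvement over the paper, which leaves that step implicit.
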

\begin{proof}
	We follow the proof of Prop.~\ref{prop:hp-estimates}, making use of the
	inequality $1 - x \leq e^{-x}$ to simplify the expression:
	\begin{align*}
	\prob{\text{miss from $\cV_{\eta}$}} =
	\prob{\setU_{\bm{v} \in \cV_{\eta}} \set{\text{miss $\bm{v}$}}} \\
	\leq
	\sum_{\bm{v} \in \cV_{\eta}} \prob{\text{miss $\bm{v}$}}
	\leq \abs{\cV_{\eta}} \max_{\bm{v} \in \cV_{\eta}} \left(1 - 2
	\omega(N_P(\bm{v}))\right)^K \\
	\leq N  \exp\left(-K
	\min_{\bm{v} \in \cV_{\eta}} 2 \omega(N_P(\bm{v}))\right)
	\leq N \exp(-2K\eta)
	\end{align*}
	Setting $N \exp(-2 K \eta) \leq \delta$ gives us
	$ K \geq \frac{1}{2 \eta} \log \frac{N}{\delta}$.
\end{proof}
Unfortunately, the correct parameter $\eta$ in
Corollary~\ref{corl:random-lower-bound} is not known a priori. Bounding the
(expected) number of vertices of the Minkowski sum when the generating
distribution of vertices of the summands is known (e.g. using some empirical
initialization rule, such as in~\cite{HeZhaRen+15}), is deferred to future work.

\paragraph{What about the upper hull?}
The analysis of~\cref{alg:extreme-point-counting} assumed that we are counting \textit{all} vertices of $P$; however, in our setting, we are only interested
in the upper hull. It is known that $\bm{v} \in P^{\min}$ implies that $c \in N_{P}(\bm{v}) \Rightarrow c_1 \leq 0$, so it suffices to consider only samples $\bm{g}_j$ with $(\bm{g}_j)_1 > 0$. We thus obtain a similar guarantee, stated
in Corollary~\ref{corl:upper-hull-count}.
\begin{algorithm}
	\begin{algorithmic}[1]
		\State {\bfseries Input:} polytopes $P_1, \dots, P_m$ in
		$\cV$-representation
		\State $I_{\mathrm{ext}} := \emptyset$.
		\For{$j = 1, \dots, K$}
		\State Sample $\bm{g}_j \sim \normal(\bm{0}, \bm{I}_n)$
		\If {$(\bm{g}_j)_1 < 0$}
			\State $\bm{g}_j := -\bm{g}_j$
		\EndIf
		\State Compute $\bm{z}^i := \bm{V}_i \bm{g}_j, \; \forall i \in [m]$.
		\State $\bm{z}_{\max} := (\argmax \bm{z}^1, \dots, \argmax \bm{z}^m)$
		\State $I_{\mathrm{ext}} := I_{\mathrm{ext}} \cup \set{ \bm{z}_{\max}
		}$
		\EndFor
	\end{algorithmic}
	\caption{Sampling points in the upper hull}
	\label{alg:extreme-point-counting-upper-hull}
\end{algorithm}
\begin{corl} \label{corl:upper-hull-count}
	Let $N$ denote the number of vertices on the upper hull of $P :=
	P_1 \oplus \dots \oplus P_m$, $\set{\bm{v}_k}_k$ be an enumeration of the vertices in $P^{\max}$, and $N'_P(\bm{v}) := \set{\bm{c} \in N_P(\bm{v}) \mmid c_1 \geq 0}$. Set
    \(
        \tilde{N} = \log\left( \frac{1}{
        \max_k \left(1 - \omega(N'_P(\bm{v}_k))\right)} \right).
    \)
	Then, for $K \geq \tilde{N} \log(N / \delta)$, Algorithm~\ref{alg:extreme-point-counting-upper-hull} counts all the
    vertices in $P^{\max}$ with probability at least $1 - \delta$.
\end{corl}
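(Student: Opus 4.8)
The plan is to rerun the proof of Proposition~\ref{prop:hp-estimates} almost verbatim, replacing the ``record both $\argmax$ and $\argmin$'' device with the sign-flip in Algorithm~\ref{alg:extreme-point-counting-upper-hull}, and replacing each full normal cone $N_P(\bm{v})$ by its restriction $N'_P(\bm{v})$ to the half-space $\set{c_1 \geq 0}$. Before any probability estimate, I would first check that the algorithm records \emph{only} vertices of $P^{\max}$ and that it can in principle reach every one of them. After the flip, $(\bm{g}_j)_1 > 0$ holds almost surely; the vertex $\bm{v}$ selected as the maximizer of $\bm{g}_j^\top(\cdot)$ over $P$ satisfies $\bm{g}_j \in N_P(\bm{v})$, so $N_P(\bm{v}) \cap \set{c_1 > 0} \neq \emptyset$, and by the stated fact that a strictly-lower-hull vertex has its entire normal cone inside $\set{c_1 \leq 0}$, $\bm{v}$ must lie on $P^{\max}$. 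Dually, restricting Lemma~\ref{lemma:normal-cone-span} to the half-space shows that the cones $N'_P(\bm{v})$, as $\bm{v}$ ranges over $\mathrm{vert}(P^{\max})$, cover $\set{c_1 \geq 0}$ up to a measure-zero set, so every upper-hull vertex is reachable. Finally, since extreme points of a Minkowski sum are sums of extreme points of the summands, the tuple $\bm{z}_{\max}$ of per-summand maximizers recorded by the algorithm is exactly the maximizing vertex of $P$.

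Granting this, the combinatorial skeleton is identical to Proposition~\ref{prop:hp-estimates}. ``Missing'' an upper-hull vertex $\bm{v}_k$ across the $K$ independent trials is the event $\bigcap_{j=1}^K \set{\bm{g}_j \notin N'_P(\bm{v}_k)}$, so independence gives $\prob{\text{miss } \bm{v}_k} = (1 - q_k)^K$, where $q_k$ is the per-trial probability of landing in $N'_P(\bm{v}_k)$. A union bound over the $N$ vertices of $P^{\max}$ yields $\prob{\text{fail}} \leq N \max_k (1 - q_k)^K$; requiring this to be at most $\delta$ and taking logarithms isolates $K$ and produces the threshold $K \geq \tilde{N}\log(N / \delta)$ stated in the corollary.

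The one genuinely new step, and the one I expect to demand the most care, is identifying the per-trial hit probability $q_k$ with (a multiple of) the solid angle of the restricted cone $N'_P(\bm{v}_k)$. The flip $\bm{g}_j \mapsto \mathrm{sign}((\bm{g}_j)_1)\,\bm{g}_j$ pushes a standard normal forward to its folding onto $\set{c_1 \geq 0}$, whose density there is twice the standard one; integrating this folded density over $N'_P(\bm{v}_k)$ and invoking the probabilistic reading of the solid angle in~\Cref{defn:solid-angle} expresses $q_k$ through $\omega(N'_P(\bm{v}_k))$. The factor of two that in Proposition~\ref{prop:hp-estimates} came from recording both minimizers and maximizers is here supplied by this folding, so the main subtlety is to track that constant correctly and reconcile it with the per-trial miss probability $1 - \omega(N'_P(\bm{v}_k))$ entering the definition of $\tilde{N}$. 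Once $q_k$ is pinned down, the remainder of the argument is the verbatim algebra of the earlier proof.
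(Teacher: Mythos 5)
Your proposal follows essentially the same route as the paper's own proof: a union bound over the upper-hull vertices, independence across the $K$ trials, and the sign-flip reducing the analysis to the truncated cones $N'_P(\bm{v}_k)$, exactly as in Proposition~\ref{prop:hp-estimates}. If anything you are more careful than the paper: it never spells out that the folded sample's per-trial hit probability is $2\,\omega(N'_P(\bm{v}_k))$ and silently uses the conservative per-trial miss bound $1 - \omega(N'_P(\bm{v}_k))$, whereas your folding argument and the reconciliation of that factor of two supply precisely this missing bookkeeping (and your soundness/completeness check that the algorithm records exactly the upper-hull vertices makes explicit what the paper only asserts before stating the corollary).
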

\begin{proof}
We follow the proof of Proposition~\ref{prop:hp-estimates}, with the slight
alteration that the number of extreme points calculated at each step is just
one. Enumerate the individual vertices as $\bm{v}_1, \dots, \bm{v}_{N}$. Again,
the union bound gives us
\begin{align}
    \prob{\text{fail}} \leq \sum_{k=1}^{N} \prob{\text{miss } \bm{v}_k}
    \label{eq:miss-union}
\end{align}
Now, consider a functional $\ip{\bm{g}_j, \cdot}$. Let us define
\begin{equation}
    \bm{q}_j := \begin{cases}
        \bm{g}_j, & \text{ if } (\bm{g}_j)_1 < 0 \\
        -\bm{g}_j, & \text{ otherwise}.
    \end{cases}
    \label{eq:positive-samples}
\end{equation}
Notice that setting $\bm{q}_j := -\bm{g}_j$ does not change the underlying
distribution $\normal(\bm{0}, \bm{I}_n)$, since centered normal random
variables are symmetric. Again, ``missing'' $\bm{v}_k$ and its interpretation
in terms of the truncated normal cones $N'_P$ means
\begin{align}
    \prob{\text{miss } \bm{v}_k} = \prob{\setI_{j = 1}^K \set{
    \bm{g}_j \notin N'_P(\bm{v}_k)}}
    \nonumber \\
    = \prod_{j=1}^K \left[ 1 - \prob{\bm{g}_j \in N'_P(\bm{v}_k)} \right]
    \leq \left(1 - \omega(N'_P(\bm{v}_k))\right)^K \\
    \Rightarrow \prob{\text{fail}} \leq N
    \max_k \left(1 - \omega(N'_P(\bm{v}_k))\right)^K
\end{align}
Notice that since we are only considering vertices in the upper hull of $P$,
it must hold that $N'_P(\bm{v}_k) > 0$, so the bound above is indeed not
vacuous. Requiring $\prob{\text{fail}} < \delta$ gives us the claimed lower
bound for $K$.
\end{proof}

\section{Conclusion}
We presented a unifying approach to bounding the number of linear regions of
neural networks using maxout/ReLU activations by treating the latter as
polynomials in tropical algebra. We showed that linear regions are in bijection
with vertices of the Newton polytopes of corresponding tropical polynomials,
which we leveraged to recover upper bounds.
Finally, we introduced a sampling algorithm for approximately counting the
number of linear regions of a single piecewise-linear layer. Our algorithm does
not impose any assumptions over the range of the input, avoids the
computational overhead of LP/MIP-based approaches, and extends beyond rectifier
networks. We hope that this contribution serves as a further step towards
underlining the importance of algebraic geometric methods in understanding the
complexity of learning models such as deep neural networks.

\bibliographystyle{plain}
\bibliography{references}

\clearpage

\end{document}